\newcommand{\occrob}{\textsc{OccRob}\ }
\newcommand{\occrobnoindent}{\textsc{OccRob}}
\newcounter{proofpart}
\xpretocmd{\proof}{\setcounter{proofpart}{0}}{}{}
\newcommand{\proofpart}[1]{%
  \par
  \addvspace{\medskipamount}%
  \stepcounter{proofpart}%
  \noindent\emph{Case \theproofpart: #1}\par\nobreak\smallskip
  \@afterheading
}
\def\HiLi{\leavevmode\rlap{\hbox to \hsize{\color{gray!30}\leaders\hrule height .6\baselineskip depth .5ex\hfill}}}
\newcolumntype{L}[1]{>{\raggedright\let\newline\\\arraybackslash\hspace{0pt}}m{#1}}
\newcolumntype{C}[1]{>{\centering\let\newline\\\arraybackslash\hspace{0pt}}m{#1}}
\newcolumntype{R}[1]{>{\raggedleft\let\newline\\\arraybackslash\hspace{0pt}}m{#1}}
\definecolor{light-gray}{gray}{0.8}
\begin{document}

\title{\textsc{OccRob}: Efficient SMT-Based Occlusion Robustness Verification of Deep Neural Networks}
\author{Xingwu Guo $^{1}$, Ziwei Zhou $^{1}$, Yueling Zhang $^{1}$, Guy Katz $^{2}$, Min Zhang $^{1}$}
\institute{$^{1}$ Shanghai Key Laboratory of Trustworthy Computing,\\ East China Normal University, Shanghai, China\\
$^{2}$ The Hebrew University of Jerusalem, Jerusalem, Isarel
}
\maketitle

\begin{abstract}
Occlusion is a prevalent and easily realizable semantic perturbation to deep neural networks (DNNs). It can fool a DNN into misclassifying an input image by occluding some segments, possibly resulting in severe errors. Therefore, DNNs planted in safety-critical systems should be verified to be robust against occlusions prior to deployment. However, most existing robustness verification approaches for DNNs are focused on non-semantic perturbations and are not suited to the occlusion case. 
In this paper, we propose the first efficient, SMT-based approach for formally verifying the occlusion robustness of DNNs. We formulate the occlusion robustness verification problem and prove it is NP-complete. Then, we devise a novel approach for encoding occlusions as a part of neural networks and introduce two acceleration techniques so that the extended neural networks can be efficiently verified using off-the-shelf, SMT-based neural network verification tools. 
We implement our approach in a prototype called \textsc{OccRob} and extensively evaluate its performance on benchmark datasets with various occlusion variants. The experimental results demonstrate our approach's effectiveness and efficiency in verifying DNNs' robustness against various occlusions, and its ability to generate counterexamples when these DNNs are not robust.
\vspace{-1mm}
\end{abstract}




\section{Introduction}
\vspace{-1mm}
Deep neural networks (DNNs) are computer-trained \textit{programs} that can implement hard-to-formally-specify tasks. They have repeatedly demonstrated their potential in enabling artificial intelligence in various domains, such as face recognition~\cite{cocskun2017face} and autonomous driving~\cite{lillicrap2015continuous}.
They are increasingly being incorporated into safety-critical applications with interactive environments. To ensure the security and reliability of these applications, DNNs must be highly dependable against adversarial and environmental perturbations. 
This dependability property is known as \emph{robustness} and is attracting a considerable amount of research efforts from both academia and industry, aimed at ensuring robustness via different technologies such as adversarial training \cite{gowal2019scalable,lyu2021towards}, testing \cite{tian2018deeptest,pei2017deepxplore}, and formal verification \cite{raghunathan2018certified,gehr2018ai2,cohen2019certified}.

Occlusion is a prevalent kind of perturbation, which may cause DNNs to misclassify an image by occluding some segment thereof  \cite{su2019one,lengyel2019easily,eykholt2018robust}. For instance, a ``turn left'' traffic sign may be misclassified as ``go straight'' after it is occluded by a tape, probably resulting in traffic accidents. 
A similar situation may occur in face recognition, where many well-trained neural networks fail to recognize faces correctly when they are partially occluded, such as when glasses are worn\cite{song2019occlusion}.
A neural network is called \textit{robust against occlusions} if small occlusions do not alter its classification results. Generally, we wish a DNN to be robust against occlusions that appear negligible to humans.

It is challenging to verify whether a DNN is robust or not on an input image if the image is occluded. 
On the one hand, the verification problem is non-convex due to the non-linear activation functions in DNNs. It is NP-complete even when dealing with common, fully connected feed-forward neural networks (FNNs) \cite{katz2017reluplex}. 
On the other hand, unlike existing perturbations, occlusions are challenging to encode using $L_p$ norms. Most existing robustness verification approaches assume that perturbations need to be defined by $L_p$ norms and then apply approximations and abstract interpretation  techniques \cite{raghunathan2018certified,gehr2018ai2,cohen2019certified} as part of the verification process. The semantic effect of occlusions partially alters the values of some neighboring pixels from large to small or in the inverse direction, e.g., 255 to 0, when a black occlusion occludes a white pixel. Therefore, existing techniques for perturbations in $L_p$ norms are not suited to occlusion perturbations.

\vspace{-0.1mm}
SMT-based approaches have been shown to be an efficient approach to DNN verification \cite{katz2017reluplex}. They are both sound and complete, in that they always return definite results and produce counterexamples in non-robust cases. We show that, although it is straightforward to encode the occlusion robustness verification problem into SMT formulas, solving the constraints generated by this na{\"i}ve encoding is experimentally beyond the reach of state-of-the-art SMT solvers, due to the inclusion of a large number of the piece-wise ReLU activation functions. Consequently, such a straightforward encoding approach cannot scale to large networks.

\vspace{-0.1mm}
In this paper, we systematically study the occlusion robustness verification problem of DNNs. We first formalize and prove that the problem is NP-complete for ReLU-based FNNs(see Appendix \ref{appendix:proof-of-np-completeness}). Then, we propose a novel approach for encoding various occlusions and neural networks together to generate new equivalent networks that can be efficiently verified using off-the-shelf SMT-based robustness verification tools such as Marabou \cite{katz2019marabou}. In our encoding approach, although additional neurons and layers are introduced for encoding occlusions, the number is reasonably small and independent of the networks to be verified. 
The efficiency improvement of our approach comes from the fact that our approach significantly reduces the number of constraints introduced while encoding the occlusion and leverages the backend verification tool’s optimization against the neural network structure. Furthermore, we introduce two acceleration techniques, namely input-space splitting to reduce the search space of a single verification, which can significantly improve verification efficiency, and label sorting to help verification terminates earlier.
We implement a tool called \occrob with Marabou as the backend verification tool. 
To our knowledge, this is the first work on formally verifying the occlusion robustness of deep neural networks. 

To demonstrate the effectiveness and efficiency of \occrobnoindent, we evaluate it on six representative FNNs trained on two benchmark datasets. 
The empirical results show that our approach is effective and efficient in verifying various types of occlusions with respect to the occlusion position, size, and occluding pixel value. 

\vspace{-0.1mm}
\noindent\textbf{Contributions.} We make the following three major contributions: (i) we propose a novel approach for encoding occlusion perturbations, by which we can leverage \emph{off-the-shelf} SMT-based robustness verification tools to verify the robustness of neural networks against various occlusion perturbations; 
(ii) we prove the verification problem of the occlusion robustness is NP-complete and introduce two acceleration techniques, i.e., label sorting and input space splitting, to improve the efficiency of verification further; and (iii) we implement a tool called \occrob and conduct experiments extensively on a collection of benchmarks to demonstrate its effectiveness and efficiency. 

\vspace{-0.1mm}
\noindent\textbf{Paper Organization.} Sec. \ref{sec:pre} introduces preliminaries. Sec. \ref{sec:occrob} formulates the occlusion robustness verification problem and studies its complexity. Sec. \ref{sec:encoding} presents our encoding approach and acceleration techniques for the verification. Sec. \ref{sec:exp} shows the experimental results. Sec. \ref{sec:rel} discusses related work, and Sec. \ref{sec:con} concludes the paper.


\section{Preliminaries}
\label{sec:pre}

\begin{wrapfigure}{r}{0.48\textwidth}
	\vspace{-10mm}
	\hspace{-7mm}\includegraphics[width=0.57\textwidth]{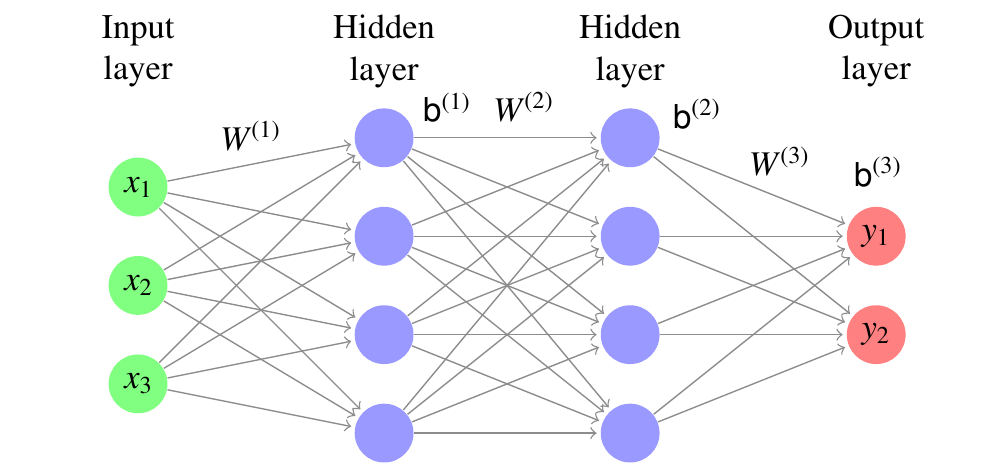}
	\vspace{-4mm}
	\caption{A fully-connected feed-forward neural network (FNN).}
	\label{fig:dnn}
	\vspace{-6mm}
\end{wrapfigure}

\subsection{Deep Neural Networks and the Robustness}
As shown in Fig. \ref{fig:dnn}, a deep neural network
consists of multiple layers. The neurons on the input layer take input values, which are computed and propagated through the hidden layers and then output by the output layer. The neurons on each layer are connected to those on the predecessor and successor layers. We only consider fully connected, feedforward networks (FNNs) \cite{Goodfellow2016}.

Given a $\lambda$-layer neural network, let $W^{(i)}$ be the weight matrix between the $(i-1)$-th and $i$-th layers, and ${\sf b}^{(i)}$ be the biases of the corresponding neurons, where $i=1,2,\ldots,\lambda$. 
The network implements a function $F:\mathbb{R}^u \rightarrow \mathbb{R}^{r}$ that is recursively defined by:  
\begin{align*}  
&\begin{aligned}
&z^{(0)} = x\\
&z^{(i)} = \sigma(W^{(i)} \cdot z^{(i-1)} + {\sf b}^{(i)}),~for\ i=1,\dots, \lambda -1
\end{aligned} \tag{Layer Function}\\
&F(x)=W^{(\lambda)} \cdot z^{(\lambda -1)} + {\sf b}^{(\lambda)}\tag{Network Function} 
\end{align*}
where $\sigma(\cdot)$ is called an \textit{activation function} and $z^{(i)}$ denotes the result of neurons at the $i$-th layer.

For example, Fig. \ref{fig:dnn} shows a 3-layer neural network with three input neurons and two output neurons, namely, $\lambda =3$, $u = 3$ and $r = 2$. 

For the sake of simplicity, we use $\Phi_F(x)= \mathop{arg\  max}_{\ell\in L} F(x)$ to denote the label  $\ell$ such that the probability $F_{\ell}(x)$ of classifying $x$ to $\ell$ is larger than those to other labels, where $L$ represents the set of labels. 
The activation function $\sigma$ usually can be a piece-wise Rectified Linear Unit (ReLU), $\sigma(x)=max(x,0)$), or S-shape functions like Sigmoid $\sigma(x)=\frac{1}{1+e^{-x}}$, Tanh $\sigma(x) = \frac{e^x - e^{-x}}{e^x + e^{-x}}$, or Arctan $\sigma(x) = tan^{-1}(x)$. In this work, we focus on the  networks that only contain ReLU activation functions, which are widely adopted in real-world applications.

A neural network is called \emph{robust} if small perturbations to its inputs do not alter the classification result \cite{szegedy2013intriguing}. Specifically, given a network $F$, an input $x_0$ and a set $\Omega$ of perturbed inputs of $x_0$, $F$ is called locally robust with respect to $x_0$ and $\Omega$ if $F$ classifies all the perturbed inputs in $\Omega$ to the same label as it does $x_0$.

\vspace{-1mm}
\begin{definition}[Local Robustness \cite{huang2017safety}]\label{robust_def}
	A neural network $F:\mathbb{R}^u \rightarrow \mathbb{R}^{r}$ is called \textit{locally robust} with respect to an input $x_0$ and a set $\Omega$ of perturbed inputs of $x$ if $\forall x \in \Omega, \Phi_F(x) = \Phi_F(x_0)$ holds. 
\end{definition}
Usually, the set $\Omega$ of perturbed inputs is defined by an $\ell_p$-norm ball around $x_0$ with a radius of $\epsilon$, i.e., $\mathbb{B}_p (x_0, \epsilon):=\{x\ |\ \| x-x_0\|_p \le \epsilon\}$ \cite{huang2017safety,boopathy2019cnn}.

\vspace{-2mm}
\subsection{Occlusion Perturbation}
\label{section:occlusion perturbation}
\vspace{-1mm}

In the context of image classification networks, occlusion is a kind of perturbation that blocks the pixels in certain areas before the image is fed into the network.
Existing studies showed that the classification accuracy of neural networks could be significantly decreased when the input objects are artificially occluded \cite{kortylewski2021compositional,zhu2019robustness}.

\begin{figure}[t]
	\centering
	\includegraphics[width=.9\linewidth]{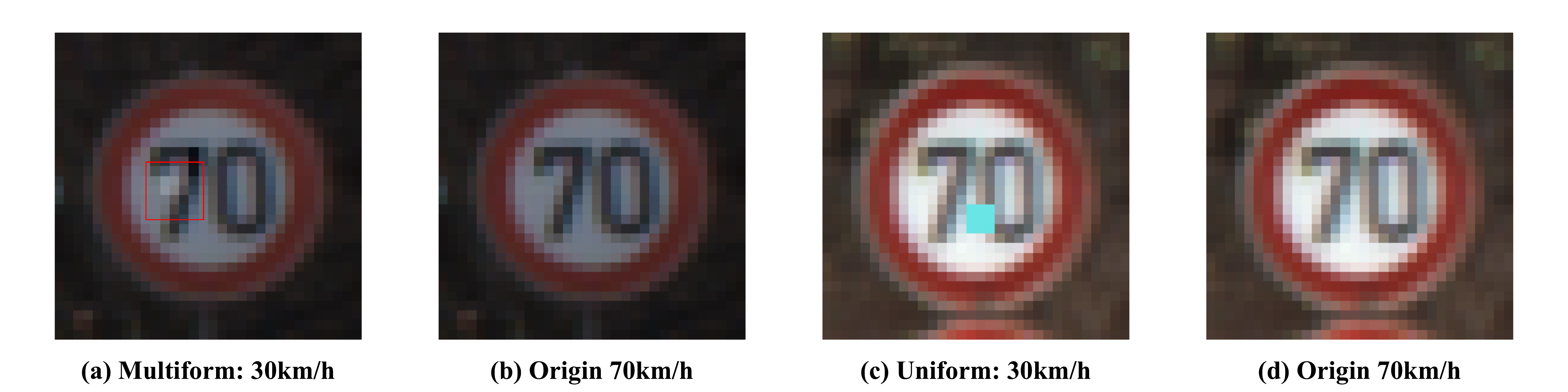}
	\vspace{-1mm}
	\caption{Two multiform and uniform occlusions to traffic signs causing mis-classifications.}
	\label{fig:occlusion example}
	\vspace{-5mm}
\end{figure}

Occlusions can have various occlusion shapes, sizes, colors, and positions. The shapes can be square, rectangle, triangle, or irregular shape. 
The size is measured by the number of occluded pixels. The occlusion color specifies the colors occluded pixels can take. The coloring of an occlusion can be either uniform,
where all occluded pixels share the same color, or 
 multiform, where these colors can vary in the range of $[-\epsilon, \epsilon]$, where $\epsilon$ specifies the threshold between an occluded pixel's value and its original value.

 \begin{wrapfigure}{r}{0.3\textwidth}
	\vspace{-4ex}
	\includegraphics[width=0.28\textwidth]{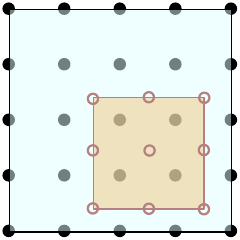}
	\vspace{-1mm}
	\caption{An example occlusion on a $5\times 5$ image at real number position.}
	\label{fig:occlusion interpolation}
	\vspace{-7mm}
\end{wrapfigure}

Prior studies \cite{eykholt2018robust,brown2017adversarial} showed that both the uniform and multiform occlusions could cause misclassification to neural networks. Fig. \ref{fig:occlusion example} shows two examples of multiform and uniform occlusions, respectively. The traffic sign for ``70km/h speed limit'' in Fig. \ref{fig:occlusion example}(a) is  misclassified to ``30km/h'' by adding a $5\times 5$ multiform occlusion. Fig. \ref{fig:occlusion example}(d) shows another sign, with different light conditions, 
where a $3\times 3$ uniform occlusion (in Fig. \ref{fig:occlusion example}(c)) causes the sign to be  misclassified to ``30km/h''.

The occlusion position is another aspect of defining occlusions. An occlusion can be placed precisely on the pixels of an image, or between a pixel and its neighbors. Fig. \ref{fig:occlusion interpolation} shows an example, where the dots represent image pixels and the circles are the occluding pixels that will substitute the occluded ones. We say that an occlusion pixel $\vartheta_{i',j'}$ at location $(i',j')$ surrounds an image pixel $p_{i,j}$ at location $(i,j)$ if and only if $|i-i'|<1$ and $|j-j'|<1$. Note that $i',j'$ are real numbers, representing the location where the occlusion pixel $o$ is placed on the image. An image pixel can be occluded by the substitute occlusion pixels if the occlusion pixels surround the image pixel. 

There are at most four surrounding occlusion pixels for each image pixel, as shown in Fig. \ref{fig:occlusion interpolation}. Let $\mathbb{I}_p$ be the set of the locations where  the surrounding occlusion pixels of $p$ are placed. After the occlusion, the value of pixel $p_{i,j}$ is altered to the new one denoted by $p'_{i,j}$, which can be computed by interpolation \cite{jaderberg2015spatial,Kirkland2010} such as next neighbour interpolation or Bi-linear interpolation based on occlusion pixels in $\mathbb{I}_p$. 
Besides that, we use a method based on $L_1$-distance to calculate how much a pixel is occluded. Since the $L_1$-distance of two adjacent pixels is 1, a surrounding occlusion pixel should not affect the image pixel if their $L_1$-distance is greater than $1$. The formula $max(0, (|1-i'+i|) + (1-j'+j) - 1)$ indicates how much an image pixel at $(i, j)$ is occluded by an occlusion pixel at $(i', j')$.  For instance, occlusion pixel at $(i', j')=(0.9, 0.9)$ has no effect to image pixel $(i, j)=(0, 0)$ since their $L_1$-distance is larger than 1. Therefore, the occlusion factor $s_{i, j}$ for pixel $p$ at $(i, j)$ can be calculated based on all surrounding occlusion pixels in $\mathbb{I}_p$ as:
\begin{align}
\label{eq:interpolation}
    s_{i, j}=max(0, \textstyle{\sum_{{i'_0, j'}\in \mathbb{I}_{p}}(|1-j+j'|)} + \textstyle{\sum_{i', j'_0\in \mathbb{I}_{p}}(|1-i'+i|)-1})
\end{align}
where $(i'_0, j'_0)$ is the first element of $\mathbb{I}_{p}$. Notably, $s$ is 1 for completely occluded pixel and 0 for the pixel that is not occluded, otherwise $s$ has a value between $(0, 1)$. Also, it is a special case for Equation \ref{eq:interpolation} when $(i', j')$ are integers, where $s$ can be reduced to $0$ or $1$.


\section{The Occlusion Robustness Verification Problem}
\label{sec:occrob}

Let $\mathbb{R}^{m\times n}$ be the set of images whose height is $m$ and width is $n$. We use $\mathbb{N}_{1, m}$ (\textit{resp.} $\mathbb{N}_{1, n}$) to denote the set of all the natural numbers ranging from $1$ to $m$ (\textit{resp.} $n$).
A coloring function $\zeta:\mathbb{R}^{m\times n}\times \mathbb{R} \times \mathbb{R} \to \mathbb{R}$ is a mapping of each pixel of an image to its corresponding color value. Given an image $x\in \mathbb{R}^{m\times n}$, $\zeta(x, i, j)$ defines the value to color the pixel of $x$ at $(i, j)$.

\begin{definition}[Occlusion function]
	\label{definition:occlusion function}
	Given a coloring function $\zeta$ and an occlusion $\vartheta$ of size $w\times h$ which is at position $(a, b)$, the occlusion function is defined as function $\gamma_{\zeta,w\times h}:\mathbb{R}^{m\times n}\times \mathbb{R}\times \mathbb{R}\rightarrow \mathbb{R}^{m\times n}$ such that $x'=\gamma_{\zeta,w\times h}(x,a,b)$ if for all $i\in \mathbb{N}_{1, n}$ and $j\in \mathbb{N}_{1, m}$, there is,  
    \begin{align}
    \label{eq:xij}
    &x'_{i, j}=x_{i, j} - s_{i, j}\times (x_{i, j} - \zeta(x, i, j)),\\
    \label{eq:zeta}
    \text{where}, &\ \zeta(x, i, j)=\frac{\sum_{(i', j')\in \mathbb{I}_{x_{i, j}}}\vartheta_{i',j'}\sqrt{(i-i')^2+(j-j')^2}}{\sum_{(i', j')\in \mathbb{I}_{x_{i, j}}}\sqrt{(i-i')^2+(j-j')^2}}.
    \end{align}
\end{definition}
\noindent $s$ in Equation \ref{eq:xij} is the occlusion factor for pixel at $(i, j)$ as mentioned in Sec. \ref{section:occlusion perturbation}. 
Note that when $i', j'$ are integers, Equation \ref{eq:xij} can be reduced to $x_{i, j}=\vartheta_{i, j}$, which represents that $x_{i,j}$ is completely occluded by the occlusion. In other words, the integer case is a special case of the real number case. Also, when pixel at $(i, j)$ is not occluded, since $s_{i,j}=0$. In this case, Equation \ref{eq:xij} can be reduced to $x'_{i, j} = x_{i, j}$.

Interpolation is handled by $\zeta$ showed in Equation \ref{eq:zeta}. It shows the standard form for the color of the new $x'_{i, j}$.
A unique color value is specified for all the pixels in the occluded area for a uniform occlusion. Therefore, $\zeta$ in Equation \ref{eq:zeta} can be reduced to $\zeta(x,i, j)=\mu$ for some $\mu \in[0,1]$. The coloring function in a multiform occlusion is defined as $\zeta(x,i,j) = x_{i, j} + \Delta_p$ with $\Delta_p\in [-\epsilon, \epsilon]$, where $\epsilon\in \mathbb{R}$ defines the threshold that a pixel can be altered.

\begin{definition}[Local occlusion robustness]
	\label{definition:occlusion-robustness}
	Given a DNN $F:\mathbb{R}^{m\times n}\rightarrow \mathbb{R}^r$, an occlusion function  $\gamma_{\zeta,w\times h}:\mathbb{R}^{m\times n}\times \mathbb{R}\times \mathbb{R}\rightarrow \mathbb{R}^{m\times n}$ with respect to  coloring function $\zeta$ and occlusion size $w\times h$, and an input image $x$, $F$ is called local occlusion robust on $x$ with $\gamma_{\zeta,w\times h}$ if  $\Phi_F(x)=\Phi_F(\gamma_{\zeta,w\times h}(x,a,b))$ holds for all $1\leq a\leq n$ and $1\leq b\leq m$. 
\end{definition}
Intuitively, Definition \ref{definition:occlusion-robustness} means that $F$ is robust on $x$ against the occlusions of $\gamma_{\zeta,w\times h}$, if on any occluded image of $x$ by the occlusion function $\gamma_{\zeta,w\times h}$, $F$ always returns the same classification result as on the original image $x$. Depending on the coloring function $\zeta$, the definition applies to various occlusions concerning shapes, colors, sizes, and positions.  We can also extend the above definition to the global occlusion robustness if $F$ is robust on all images concerning $\gamma_{\zeta,w\times h}$. 

We prove that even for the case of uniform occlusion, a special case of the multiform one,  the local occlusion robustness verification problem is NP-complete on the ReLU-based neural networks. We leave the details of the proof to Appendix \ref{appendix:proof-of-np-completeness}.


\section{SMT-Based Occlusion Robustness Verification}
\label{sec:encoding}
\vspace{-1mm}
\subsection{A Na{\"i}ve SMT Encoding Method}
The verification problem of FNNs' local occlusion robustness can be straightforwardly encoded into an SMT problem. 
In Definition \ref{definition:occlusion-robustness}, 
we assume that $x$ is classified by $\Phi$ to the label $\ell_q$, i.e., $\Phi(x)=\ell_q$, for a label $\ell_q \in L$. 
To prove $F$ is robust on $x$ after $x$ is occluded by occlusion $\vartheta$ with size $w\times h$, it suffices to prove that $F$ classifies every occluded image $x'=\gamma_{\zeta, w\times h}(a,b)$ to $\ell_q$ for all $1\leq a\leq n$ and $1\leq b\leq m$. This is equivalent to proving that the following constraints are not satisfiable:
\begin{align} 
&1 \le a \le n, 1 \le b \le m,\\
&\begin{aligned}
\label{smt:occ}
&\textstyle\bigwedge_{i\in \mathbb{N}_{1, n}, j\in \mathbb{N}_{1, m}}\\ 
&\hspace{4mm} \left(((a-1 < i < a+w+1) \wedge (b-1 < j < b+h+1) \wedge x'_{i,j}=\gamma_{\zeta, w\times h}(x, a, b)_{i, j}) \vee\right.\\   
&\hspace{4mm} \left.((i \ge a+w+1) \vee (i\le a-1) \vee (j\ge b+h+1) \vee (j\le b-1))\wedge x'_{i,j}=x_{i,j})\right),
\end{aligned} \\   
&\textstyle\bigvee_{l\in \mathbb{N}_{1, q-1}\cup \mathbb{N}_{q+1, r}} F(x')_l\ge F(x')_{q}.\label{smt:label}  
\end{align}

The conjuncts in Eq.~\ref{smt:occ} define that $x'$ is an occluded instance of $x$, and the disjuncts in Eq.~\ref{smt:label} indicate that, when satisfiable,  there exists some label $\ell_i$ which has a higher probability than $\ell_q$ to be classified to. Namely, the occlusion robustness of $F$ on $x$ is falsified, with $x'$ being a witness of the non-robustness. Note that this naive encoding considers the occlusion position's real number cases since function $\gamma$ implicitly includes the interpolation.

\begin{figure}[t]
	\centering
	\includegraphics[width=\linewidth]{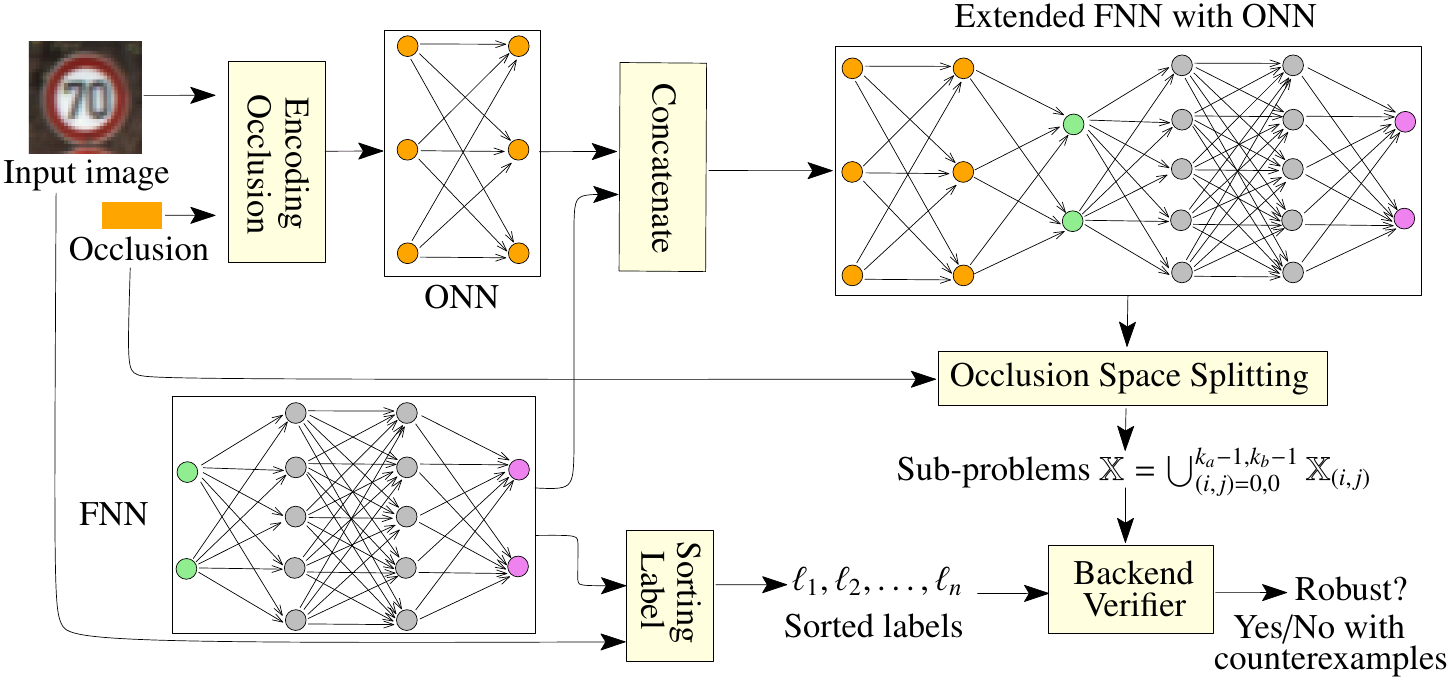}
	\vspace{-5mm}
	\caption{The workflow of encoding and verifying FNN's robustness against occlusions.}
	\label{fig:occlusion verification process}
	\vspace{-4mm}
\end{figure}

Although the above encoding is straightforward, solving the encoded constraints is experimentally beyond the reach of general-purpose existing SMT solvers due to the piece-wise linear ReLU activation functions in the definition of $F$ in the constraints of Eq.~\ref{smt:label}, and the large search space $m\times n\times {(2\epsilon)}^{w\times h}$ (see Experiment II in Sec. \ref{sec:exp}).

\vspace{-3mm}
\subsection{Our Encoding Approach}\label{subsec:encoding}
\vspace{-1mm}
\noindent\textbf{An Overview of the Approach.}
To improve efficiency, we propose a novel approach for encoding occlusion perturbations into four layers of neurons and concatenating the original network to these so-called \textit{occlusion layers}, constituting a new neural network which can be efficiently verified using state-of-the-art, SMT-based verifiers.

Fig. \ref{fig:occlusion verification process} shows the overview of our approach. 
Given an input image and an occlusion, 
we first construct a 3-hidden-layer occlusion neural network (ONN) and then concatenate it to the original FNN by connecting the ONN's output layer to the FNN's input layer. The combined network represents all possible occluded inputs and their classification results. The robustness of the constructed network can be verified using the existing SMT-based neural network verifiers.

We introduce two acceleration techniques to speed up the verification further. First, we divide the occlusion space into several smaller, orthogonal spaces, and verify a finite set of sub-problems on the smaller spaces. Second, we employ the eager falsification technique \cite{guo2021eager} to sort the labels according to their probabilities of being misclassified to. The one with a larger probability is verified earlier by the backend tools. Whenever a counterexample is returned, an occluded image is found such that its classification result differs from the original one. If all sub-problems are verified and no counterexamples are found, the network is verified robust on the input image against the provided occlusion.

\vspace{1mm}
\noindent\textbf{Encoding Occlusions as Neural Networks.} 
Given a coloring function $\zeta$, an occlusion size $w\times h$ and an input image $x$ of size $m\times n$, we construct a neural network $O:\mathbb{R}^{4+ct}\rightarrow \mathbb{R}^{m\times n}$ to encode all the possible occluded images of $x$, where $c=1$ if $x$ is a grey image and $c=3$ if $x$ is an RGB image, $t=0$ for the uniform occlusion and $t=w\times h$ for the multiform one. 

Fig. \ref{fig:occNN} shows the neural network architecture for encoding occlusions. We divide it into a fundamental part and an additional part. 
The former encodes the  occlusion position and the uniform occlusion color. 
The additional part is needed only by the multiform occlusion to encode the coloring function. 
Without loss of generality, we assume that 
the input layer takes the vector $(a,w,b,h,\zeta)$, where $(a,b)$ is the top-left coordinate of occlusion area in $x$. 
The coloring function $\zeta$ is admitted by other $c\times t$ neurons in the input layer when the occlusion is multiform.  

\begin{wrapfigure}{r}{0.58\textwidth}
	\vspace{-8mm}
	\includegraphics[width=\linewidth]{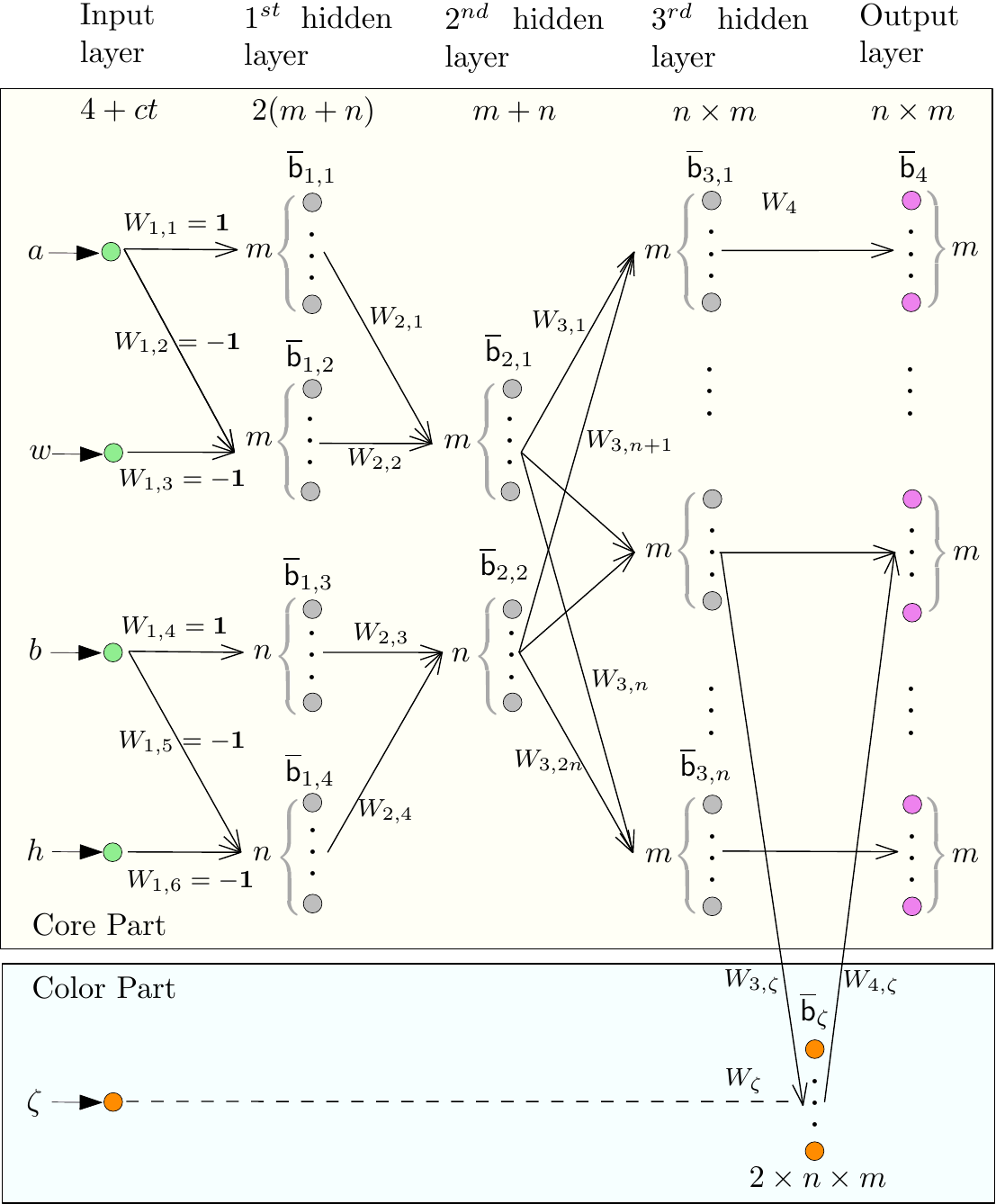}
	\caption{An occlusion neural network for the occlusions on an image $x$ with $\zeta$ and $w\times h$.}
	\label{fig:occNN}
	\vspace{-8mm}
\end{wrapfigure}

\vspace{1mm}
\noindent\textit{(1) Encoding occlusion positions.} We explain the weights and biases that are defined in the neural network to encode the occlusion position. On the connections between the input layer and the first hidden layer, the weights in matrices $W_{1,1}$, $W_{1,2}$ and $W_{1,3}$ are 1, -1 and -1, respectively. Note that we hide all the edges whose weights are 0 in the figure  for clarity.
The biases in $\overline{\textsf{b}}_{1, 1}$ are $(-1,-2,\ldots,-m)$ for the first $m$ neurons on the first hidden layer. 
Those in $\overline{\sf b}_{1, 2}$ are $(2,3,\ldots,m+1)$. 
The weights in $W_{1, 4}$, $W_{1, 5}$, $W_{1, 6}$ and the biases in $\overline{\textsf{b}}_{1, 3}$ and $\overline{\textsf{b}}_{1, 4}$ are defined in the same way. We omit the details due to the page limitation. 

For the second layer, the diagonals of weight matrices $W_{2, 1}$ to $W_{2, 4}$ are set to -1, and the rest of their entries are 0. The biases in $\overline{\sf b}_{2, 1}$ and $\overline{\sf b}_{2, 2}$ are $1$.  After the propagation to the second hidden layer, a pixel at position $(i,j)$ in the image $x$ is occluded if and only if both the outputs of the $i^{th}$ neuron in the first $m$ neurons and the $j^{th}$ neuron in the remaining $n$ neurons on the second hidden layer are 1. 

The third hidden layer represents the occlusion status of each pixel in the original image $x$. $2n$ weight matrices connect the second layer and the $n\times m$ neurons of the third layer. For example, we consider the weights in $W_{3, i}$ and $W_{3, n+i}$ which connect the $i^{th}$ group of $m$ neurons in the third layer to the second layer. The size of $W_{3, i}$ is $m \times m$, and the weights in the $i^{th}$ row are 1 while the rest is 0. The size of $W_{3, n+i}$ is $m\times n$. The weights on its diagonal are set to 1, while the rest are set to 0. All the biases in $\overline{\textsf{b}}_{3, 1}$ to $\overline{\textsf{b}}_{3, n}$ are -1. 
The output of the third layer indicates the occlusion status of all the pixels. If a pixel at $(i, j)$ is occluded, then the output of the $(i\times m + j)^{th}$ neuron in the third layer is 1, and otherwise, 0.

\vspace{1mm}
\noindent\textit{(2) Encoding Coloring Functions.} We consider the uniform and multiform coloring functions separately for verification efficiency, although the former is a special case of the latter. We first consider the general multiform case. 
In the multiform case, we introduce $2\times n\times m$ extra neurons in the third hidden layer, as shown in the bottom part of Fig. \ref{fig:occNN}. These neurons can be combined with the third layer, but it would be more clear to separate them. The weight matrix $W_{3, \zeta}$ connects the third layer to these neurons, with its first half of diagonal set to 1, and the second half set to -1. This helps retain the sign of the input $\zeta$ during propagation. 
The weight matrix $W_{\zeta}$ connects the input $\zeta$ to these neurons, whose diagonal are 1 and the biases $\overline{\textsf{b}}_{\zeta}$ are -1. 
These neurons works just like the third layer, except that they not only represent the occlusion status of pixels, but also preserve the input $\zeta$. If a pixel at $(i, j)$ is occluded and $\zeta$ has a positive value, then the $(i\times m + j)^{th}$ output in the first half of them is $\zeta$. The $(i\times m + j)^{th}$ output in the second half is $\zeta$ when $\zeta$ has a negative value. Otherwise, the output is 0. 
In the uniform case, it can be encoded together with input images, and we thus explain in the following paragraph.

\vspace{1mm}
\noindent\textit{(3) Encoding Input Images.} In the fourth layer, we use $W_4$ to denote the weight matrix connecting the third layer. $W_4$ is used to encode pixel values of the input image $x$ and the coloring function $\zeta$ of occlusions. In the uniform case, the weight $\textsf{w}(i, i)$ in the diagonal of $W_4$ is $\textsf{w}(i, i) = \zeta_i - x_i$ and the biases $\overline{\textsf{b}}_{4} = \textbf{x}$ where $\textbf{x}$ is the flattened vector of the original input image. 
In the multiform case, the weight matrix $W_{4, \zeta}$ connects the neurons in the bottom part that preserves information of input $\zeta$ to the fourth layer. The first half of $W_{4, \zeta}$ is identical to $W_4$, and the second half of $W_{4, \zeta}$ has its diagonal set to -1. It provides the value of the coloring function $\zeta$ with any sign for each occluded pixel. The output of the $j^{th}$ neuron in the $i^{th}$ group of the fourth layer is the raw pixel value plus $\zeta$ if the pixel at $(i, j)$ is occluded; otherwise, it is the raw pixel value of $p$.

\vspace{1mm}
\noindent\textbf{An Illustrative Example.} We show an example of constructing the occlusion network on a $2\times 2$, single-channel image in Fig. \ref{fig:occlusion layer example}. In this example, we assume that the input image is $x=[0.4, 0.6, 0.55, 0.72]$ and the occlusion applied to $x$ has a size of $1\times 1$, which means $w=1$ and $h=1$. For uniform occlusion, the coloring function $\zeta$
\begin{wrapfigure}{r}{0.6\textwidth}
	\vspace{-6mm}
	\includegraphics[width=\linewidth]{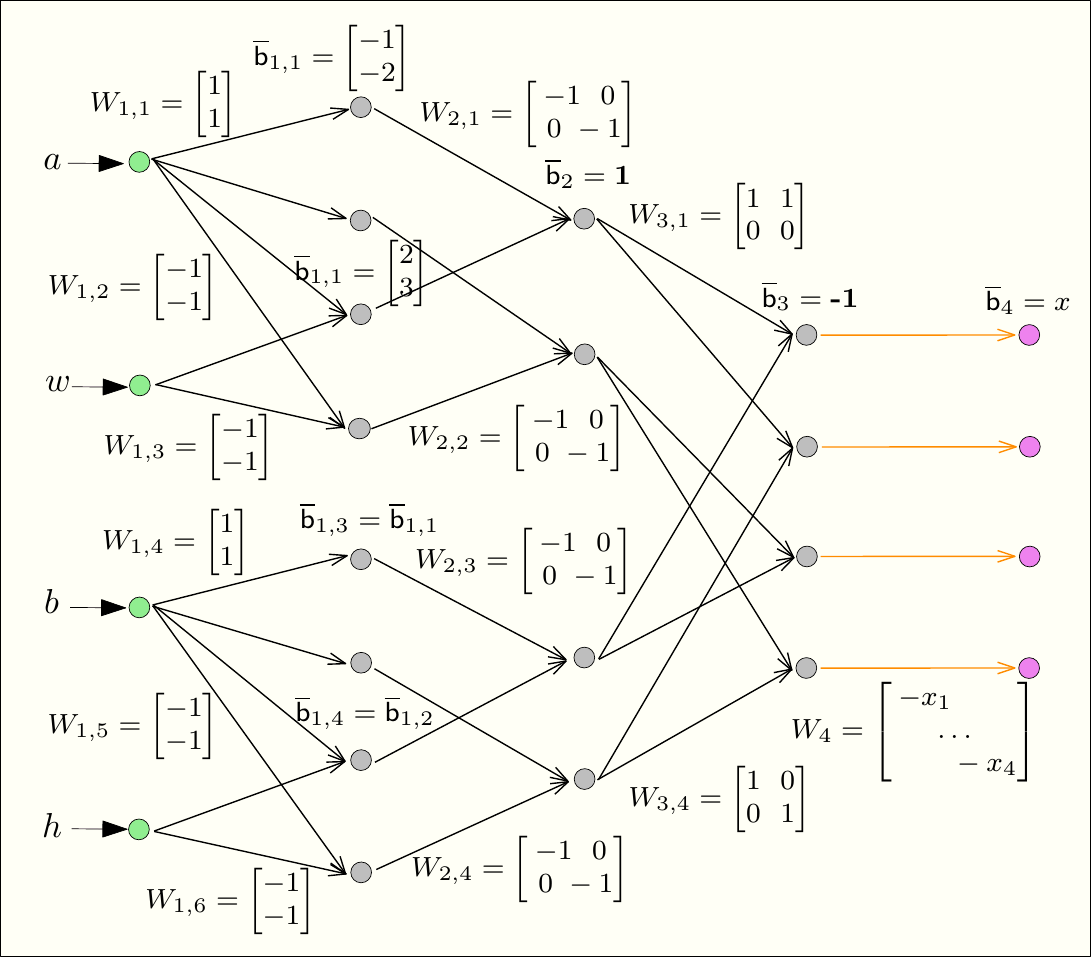}
	\vspace{-4mm}
	\caption{An example of encoding a one-pixel uniform occlusion as a neural network.}
	\vspace{-10mm}
	\label{fig:occlusion layer example}
\end{wrapfigure}
has a fixed value of 0, and for multiform case, the threshold $\epsilon$ that a pixel can be altered is set to $0.1$.

We suppose the occlusion is applied at position $(1, 2)$, which means $a=1$ and $b=2$ for the input of occlusion network. In the forward propagation, we calculate the output of the first layer by $a\times W_{1, 1} + \overline{\textsf{b}}_{1, 1}$ and $a\times W_{1, 2} + b\times W_{1, 3} + \overline{\textsf{b}}_{1, 2}$ and can get $(0, 0, 0, 1)$ for the first four neurons. Following the same process, we get the output of the second 4 neurons, $(1, 0, 0, 0)$. After propagation to the second layer, it outputs $(1, 0), (0, 1)$ based on $W_{2, 1}, W_{2, 2}$ and $\overline{\textsf{b}}_{2}$, representing the second column and the first row of $x$ is under occlusion. Likely, the third layer outputs $(0, 1, 0, 0)$ based on its weight matrices and biases, representing that the second pixel in the first row is occluded. 
After propagation to the fourth layer, the occlusion network outputs an occluded image $x'=[0.4, 0, 0.55, 0.72]$ based on $W_4$ and $\overline{\textsf{b}}_{4}$. It is identical to the expected occluded image, where the second pixel is occluded, and other pixels stay unchanged.
Suppose we change $a$ to some real number, for instance, 1.5. After the same propagation, we will get an output of $(0, 0.5, 0, 0.5)$ in the third layer, representing that the neurons in the second column are affected by the occlusion by a factor of 0.5. The fourth layer then outputs $[0.4, 0.3, 0.55, 0.36]$, which is the corresponding occluded image $x'$.

In the multiform case, as mentioned at the first, we suppose the threshold $\epsilon=0.1$, and keep all other settings. Then after the same propagation to the third layer, the third layer will output $(0, 1, 0, 0)$, representing that the second pixel is occluded. Those extra neurons then output $(0, 0.1, 0, 0, 0, 0, 0, 0)$ where the second neuron in the first half is $0.1$ and 0 for the remaining. This indicates both that the second pixel in the first row is occluded, and has an epsilon of $0.1$. After propagation to the fourth layer, the occlusion network outputs $x’=[0.4, 0.7, 0.55, 0.72]$ based on its $W_4$ and $\overline{\textsf{b}}_{4}$. As expected, the second pixel is occluded and increases by $0.1$, and other pixels stay unchanged. For the case of a negative $\epsilon$ of $-0.1$, the extra neurons output $(0, 0, 0, 0, 0, 0.1, 0, 0)$. Note that the second neuron in the second half is $0.1$ and the remaining are 0, which helps retain the sign of $-0.1$. The fourth layer then outputs $[0.4, 0.5, 0.55, 0.72]$, which is the expected occluded image where the second pixel decreases by $0.1$.

\vspace{-2mm}
\subsection{The Correctness of the Encoding} 
\label{subsec:correctness of encoding}
\vspace{-1mm}
Given an input image $x$, a rectangle occlusion of size $w\times h$, and a coloring function $\zeta$, 
let $O$ be the corresponding occlusion neural network constructed in the approach above. Let $F$ be the FNN to verify. We concatenate $O$ to $F$ by connecting $O$'s output layer to $F$'s input layer. The combined network implements the composed function $F\circ O$. The problem of verifying the occlusion robustness of $F$ on the input image $x$ is reduced to a regular robustness verification problem of $F\circ O$. 

\begin{theorem}[Correctness]\label{thm:correctness}
An FNN $F$ is robust on the input image $x$ with respect to a rectangle occlusion in the size of $w\times h$ and a coloring function $\zeta$ if and only if $\Phi_{F\circ O}((a,w,b,h,\zeta))=\Phi_F(x)$ for all $1\leq a\leq n$ and $1\leq b\leq m$.
\end{theorem}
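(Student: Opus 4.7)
The plan is to reduce Theorem~\ref{thm:correctness} to a single semantic claim about the occlusion neural network $O$, namely the identity
\begin{equation*}
O\bigl((a,w,b,h,\zeta)\bigr) \;=\; \gamma_{\zeta,w\times h}(x,a,b) \qquad \text{for every } 1\le a\le n,\; 1\le b\le m.
\end{equation*}
Once this is established, the theorem follows immediately: by Definition~\ref{definition:occlusion-robustness} the FNN $F$ is locally occlusion robust on $x$ iff $\Phi_F(\gamma_{\zeta,w\times h}(x,a,b))=\Phi_F(x)$ for all valid $(a,b)$, and by construction of the concatenated network $\Phi_{F\circ O}((a,w,b,h,\zeta))=\Phi_F(O((a,w,b,h,\zeta)))$, so substituting the identity gives the biconditional.

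To prove the identity, I would proceed layer by layer through $O$, showing that the output of each hidden layer realises the intended geometric/arithmetic quantity. First, I would verify that the first hidden layer, after ReLU, emits for each row index $i$ the two quantities $\max(0,a-1-i)$ and $\max(0,i-(a+w)+1)$ (and analogously for columns using $b,h$); these are exactly the truncated $L_1$-style distances needed in the definition of the occlusion factor $s_{i,j}$ from Sec.~\ref{section:occlusion perturbation}. Next, I would check that the diagonal weight matrices $W_{2,1},\ldots,W_{2,4}$ together with the bias $1$ turn these distances into the per-axis occlusion weight, so that the second layer outputs, for each row $i$ and column $j$, exactly the two factors whose product equals $s_{i,j}$ (Equation~\ref{eq:interpolation}). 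Then I would verify that the third layer combines these per-row and per-column factors via $W_{3,i}$ and $W_{3,n+i}$ (with bias $-1$ and ReLU) to produce, in the $(i\cdot m + j)$-th neuron, the scalar $s_{i,j}\in[0,1]$; in the integer case this collapses to the indicator of whether pixel $(i,j)$ is occluded, while in the real-valued case it yields the bilinear interpolation weight.

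For the fourth layer, I would separate the two coloring regimes. In the uniform case, the diagonal entries $\zeta_i-x_i$ of $W_4$ together with the bias $\overline{\sf b}_4=\mathbf{x}$ compute $x_{i,j}+s_{i,j}(\zeta-x_{i,j})=x_{i,j}-s_{i,j}(x_{i,j}-\zeta(x,i,j))$, matching Eq.~\ref{eq:xij} exactly. In the multiform case, I would first verify that the auxiliary block of $2nm$ neurons, driven by $W_{3,\zeta},W_\zeta$ and bias $\overline{\sf b}_\zeta$, outputs $s_{i,j}\cdot \max(\zeta,0)$ in the first half and $s_{i,j}\cdot\max(-\zeta,0)$ in the second half; this is the step that uses ReLU to retain the sign of a potentially negative perturbation $\zeta\in[-\epsilon,\epsilon]$. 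Then the first half of $W_{4,\zeta}$ (identical to $W_4$) together with its second half (diagonal $-1$) produces $x_{i,j}+s_{i,j}\cdot\zeta$, again matching Eq.~\ref{eq:xij} under the multiform coloring $\zeta(x,i,j)=x_{i,j}+\Delta_p$.

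The main obstacle, and the step that will require the most care, is the sign-preservation argument in the multiform case: because all non-linearities in $O$ are ReLUs, one cannot simply carry $\zeta$ through a single diagonal weight, and the correctness of the split into positive and negative halves must be verified jointly with the occlusion-factor neurons so that the output is exactly $s_{i,j}\zeta$ rather than $s_{i,j}|\zeta|$ or $\max(0,s_{i,j}\zeta)$. A secondary subtlety is confirming that the formula in Eq.~\ref{eq:interpolation} (which uses an $L_1$-type saturation) actually coincides with what the two-stage ReLU computation in layers one through three produces for all real $(a,b)$, including the boundary cases where the occlusion straddles integer coordinates; I would discharge this by a short case analysis on whether $a$ and $b$ are integral, reusing the observation already made in Sec.~\ref{section:occlusion perturbation} that the integer case is a degenerate instance of the real-valued one.
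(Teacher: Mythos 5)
Your proposal is correct and follows essentially the same route as the paper: it reduces the theorem to exactly the identity stated in Lemma~\ref{definition:correctness of the encoding approach} (that $O$ realises $\gamma_{\zeta,w\times h}(x,\cdot,\cdot)$), derives the biconditional from Definition~\ref{definition:occlusion-robustness} by substitution, and then discharges the lemma by the same forward-propagation analysis through the four layers, with the same case splits (uniform vs.\ multiform coloring, integer vs.\ real positions, and the sign-splitting of $\zeta$ into positive and negative halves). The only difference is organisational — you verify layer by layer for all pixels at once, while the paper argues pixel by pixel according to occlusion status — which does not change the substance of the argument.
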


Theorem \ref{thm:correctness} means that all the occluded images from $x$ are classified by $F$ to the same label as $x$, which implies the correctness of our proposed encoding approach. To prove Theorem \ref{thm:correctness}, it suffices to show that the encoded occlusion neural network represents all the possible occluded images. In other words, when being perceived as a function, the network outputs the same occluded image as the occlusion function for the same occlusion coordinate $(a,b)$, as formalized in the following lemma. 

\begin{lemma}
	\label{definition:correctness of the encoding approach}
	Given an occlusion function $\gamma_{\zeta,w\times h}:\mathbb{R}^{m\times n}\times \mathbb{R}\times \mathbb{R}\rightarrow \mathbb{R}^{m\times n}$ and an input image $x$, 
	let $O_{\gamma,x}:\mathbb{R}^{4+ct}\rightarrow \mathbb{R}^{m\times n}$ be the corresponding occlusion neural network. There is $\gamma_{\zeta,w\times h}(x,a,b)=O_{\gamma,x}(a,w,b,h,\zeta)$ for all $1\leq a\leq n$ and $1\leq b\leq m$. 
\end{lemma}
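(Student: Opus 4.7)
The plan is to fix an arbitrary image pixel coordinate $(i,j) \in \mathbb{N}_{1,n}\times\mathbb{N}_{1,m}$ and an arbitrary occlusion position $(a,b)$, and then trace the forward propagation of $O_{\gamma,x}$ on the input $(a,w,b,h,\zeta)$ one layer at a time. At each step I will show that the value computed by the relevant neuron equals an intended symbolic quantity, so that by induction on layer depth the final output coordinate equals $\gamma_{\zeta,w\times h}(x,a,b)_{i,j}$ as given by Equations~\ref{eq:xij} and \ref{eq:zeta}. Because the construction is modular (position encoding vs.\ color encoding, uniform vs.\ multiform), I will prove the position part once and then treat the two color variants as separate closing steps.

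First, I would analyze the first hidden layer. The block that depends on $a$ produces, after the biases $\overline{\textsf b}_{1,1}=(-1,\ldots,-m)$ and $\overline{\textsf b}_{1,2}=(2,\ldots,m+1)$ and ReLU, the two values $\mathrm{ReLU}(a-i)$ and $\mathrm{ReLU}(i+1-a)$ at the $i$-th unit of each group; analogously the $b$-block yields $\mathrm{ReLU}(b-j)$ and $\mathrm{ReLU}(j+1-b)$. I would then verify that the second hidden layer, with the $-1$ diagonals and bias $1$, computes the one-dimensional occlusion contribution $\mathrm{ReLU}\bigl(1-|a-i|\bigr)$ in the row direction and $\mathrm{ReLU}\bigl(1-|b-j|\bigr)$ in the column direction (this is the routine piece: the ReLU of $1-\mathrm{ReLU}(u)-\mathrm{ReLU}(-u)$ equals $\max(0,1-|u|)$). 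The third layer combines these via $W_{3,i}$ and $W_{3,n+i}$ with bias $-1$, giving exactly the expression $\max\bigl(0,\,(1-|a-i|)+(1-|b-j|)-1\bigr)$, which by the definition in Equation~\ref{eq:interpolation} equals the occlusion factor $s_{i,j}$. Both the integer case (where $s_{i,j}\in\{0,1\}$) and the real-valued case (fractional $s_{i,j}$) are covered by this one identity.

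For the final layer I would treat the two color regimes in turn. In the uniform case, $\zeta$ is a constant $\mu$, so the diagonal weight $\textsf w(i,i)=\mu-x_{i,j}$ and bias $x_{i,j}$ make the fourth-layer output equal $s_{i,j}\cdot(\mu - x_{i,j})+x_{i,j} = x_{i,j}-s_{i,j}(x_{i,j}-\zeta(x,i,j))$, which is exactly Equation~\ref{eq:xij}. In the multiform case, I would separately inspect the auxiliary $2nm$ neurons: for the pixel $(i,j)$, the first half outputs $\mathrm{ReLU}\bigl(s_{i,j}+\zeta_{i,j}-1\bigr)=\zeta_{i,j}$ when $s_{i,j}=1$ and $\zeta_{i,j}\ge 0$, and the second half symmetrically captures the magnitude when $\zeta_{i,j}<0$; combined with the first/second halves of $W_{4,\zeta}$ (identity vs.\ $-1$ diagonal) this recovers the signed value $s_{i,j}\cdot\zeta_{i,j}$ in every sign case, and adding $x_{i,j}$ through $\overline{\textsf b}_4$ again yields Equation~\ref{eq:xij}.

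The main obstacle I anticipate is the sign-splitting gadget for the multiform case: one must check that the two halves of the auxiliary block interact correctly with the ReLU so as to produce $s_{i,j}\cdot\zeta_{i,j}$ exactly (and not an extra spurious contribution) across the four sign/occlusion combinations, and in particular that the ``$s_{i,j}+\zeta_{i,j}-1$'' threshold still behaves correctly for fractional $s_{i,j}$ arising from real-valued occlusion positions. To be safe I would do a small case analysis on $\mathrm{sign}(\zeta_{i,j})$ and on whether $s_{i,j}\in\{0,1\}$ or $s_{i,j}\in(0,1)$, and verify each case against Equation~\ref{eq:xij}. Once these cases are dispatched, the lemma follows by quantifying over all $(i,j)$ and all valid $(a,b)$.
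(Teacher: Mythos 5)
Your overall route is the same as the paper's: trace the forward propagation coordinate-by-coordinate, show that the third hidden layer outputs the occlusion factor $s_{i,j}$ of Equation~\ref{eq:interpolation}, and then check that the fourth layer applies Equation~\ref{eq:xij} in the uniform and multiform regimes separately. Where you differ is in packaging: the paper splits into explicit cases (fully occluded vs.\ not occluded in the sketch; one vs.\ two elements of $\mathbb{I}_p$ for real-valued positions in the appendix), whereas you collapse everything into one closed-form identity for the second and third layers. That unification is attractive, but as written it contains a concrete error: the second group of first-layer neurons receives both $a$ and $w$ (weights $W_{1,2}=W_{1,3}=-1$, biases $(2,\ldots,m+1)$), so the $i$-th unit computes $\mathrm{ReLU}(i+1-a-w)$, not $\mathrm{ReLU}(i+1-a)$. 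Consequently the second layer measures proximity to the \emph{interval} $[a,a+w-1]$, not to the point $a$, and your claimed third-layer value $\max\bigl(0,(1-|a-i|)+(1-|b-j|)-1\bigr)$ is only correct for $w=h=1$: for a pixel at the center of a $5\times 5$ occlusion it would return $0$ instead of $1$. The fix is mechanical (replace $|a-i|$ by the distance to the nearer of $a$ and $a+w-1$, clipped at $0$ inside the interval), but without it the stated identity does not equal $s_{i,j}$ and the reduction to Equation~\ref{eq:interpolation} fails.

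Your flagged obstacle in the multiform case is well taken and is, if anything, understated: $\mathrm{ReLU}(s_{i,j}+\zeta_{i,j}-1)$ equals $s_{i,j}\cdot\zeta_{i,j}$ only when $s_{i,j}\in\{0,1\}$, so for fractional occlusion factors arising from real-valued positions the gadget does not compute the product exactly. The paper itself only gestures at this (via the $\exp/\log$ identity and its ReLU surrogate), so your proposed sign/occlusion case analysis is the right instinct, but a complete proof would have to either restrict the real-position multiform claim or supply a genuine multiplication gadget.
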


\begin{proof}[Sketch]
	It suffices to prove $\gamma_{\zeta,w\times h}(x,a,b)_{i,j}=O_{\gamma,x}(a,w,b,h,\zeta)_{i,j}$ for all $i\in \mathbb{N}_{1, n}$ and $j\in \mathbb{N}_{1, m}$. By Definition \ref{definition:occlusion function}, we consider the following two cases:
	\proofpart{When a pixel $p$ at position $(i, j)$ is fully occluded, we have $\gamma_{\zeta,w\times h}(x,a,b)_{i,j}=\zeta (x,i,j)$. We need to prove that $O_{\gamma,x}(a,w,b,h,\zeta)_{i,j}=\zeta(x,i,j)$.} 
	
	Suppose $p$ is covered by an arbitrary uniform occlusion with size of $w_0\times h_0$ at position $(a_0, b_0)$. We can observe that for that pixel $p$,  $i > a_0 \wedge i < a_0 + w_0 - 1$ and $j > b_0 \wedge j < b_0 + h_0 - 1$ hold since $p$ is covered by the occlusion.

	We show the output of $O_{\gamma,x}(a,w,b,h,\zeta)_{i,j}$ by inspecting the $(i * n + j)^{th}$ output of the occlusion network after propagation, starting from inspecting the output of the $i^{th}$ and $(i+m)^{th}$ neurons of the first layer. According to the network structure discussed in Sec. \ref{subsec:encoding}, we can tell that the $i^{th}$ neuron in the first layer is 0 only when $i > a_0$, the same property holds for the $(i+m)^{th}$ neuron when $i < a_0 + w_0 - 1$. Therefore, the output for the $i^{th}$ and $(i+m)^{th}$ neurons of the first layer is 0, which leads to the $i^{th}$ neuron in the first part of the second layer has output of value 1. Through the similar process, we can get that the value of $z_j^{(2)}$ in the second part of the second layer is also 1. 
	
	The $(i \times n + j)^{th}$ neuron in the third layer is based on the $i^{th}$ neuron and $j^{th}$ neuron of the second layer that we just discussed. Therefore, the output of that neuron,  $z^{(3)}_{i \times n + j}$, is 1. 
	For uniform occlusion, suppose the coloring function $\zeta$ has a fixed value $\mu_0$. By propagating the output $z^{(3)}_{i \times n + j}$ to the fourth layer, which is calculated as $W_4 \times z^{(3)} + \overline{\textsf{b}}_{4}$, the $(i \times n + j)^{th}$ output of the fourth layer is $1\times (\mu_0 - p_{i, j}) + p_{i, j} = \mu_0$. Likely, for multiform occlusion, $\zeta$ indicates the threshold $\epsilon_0$ that a pixel can change. The $(i \times n + j)^{th}$ extra neuron outputs $\epsilon_0$ , then the corresponding neuron in the fourth layer outputs $p_{i, j} + \epsilon_0$.
	
	This output of $O_{\gamma,x}(a,w,b,h,\zeta)_{i,j}$ is identical to $\gamma_{\zeta,w\times h}(x,a,b)_{i,j}$, the expected pixel value at position $(i, j)$, which also indicates that the color is correctly encoded.

	\proofpart{When a pixel $p$ at position $(i, j)$ is not occluded, we have $\gamma_{\zeta,w\times h}(x,a,b)_{i,j}=x_{i,j}$. Then, we need to prove that $O_{\gamma,x}(a,w,b,h,\zeta)_{i,j}=x_{i,j}$.}
	
	
	In this case, we can observe that $i<a_0\vee i\geq a_0+w_0$ and $j<b_0\vee j\geq b_0+h_0$ hold for pixel $p$. Then We can tell that the corresponding neuron in the third layer outputs 0 and  the output of the $(i * n + j)^{th}$ neuron in the fourth layer is the origin pixel value of $p$ following the similar process discussed in case 1.
	
\end{proof}

For the occlusion with real number position, some more cases need to be discussed, but the proof has a very similar sketch as the normal occlusion with integer position. 
We leverage the equality of $a\times b = exp(log(a)+log(b))$ and add it to the propagation between the third layer and those extra neurons only when the occlusion is at real number positions in the multiform case. 
And we use $ReLU(a+b-1)$ as an alternative to logarithms and exponents in implementation since Marabou does not support such operations.
A complete proof for Lemma \ref{definition:correctness of the encoding approach} is deferred to Appendix \ref{sec:lemma3}.

Theorem \ref{thm:correctness} can be directly derived from Lemma \ref{definition:correctness of the encoding approach} and Definition \ref{definition:occlusion-robustness} by substituting $\gamma_{\zeta,w\times h}(x,a,b)$ for $O_{\gamma,x}(a,w,b,h,\zeta)$ in the definition. 

\vspace{-2mm}
\subsection{Verification Acceleration Techniques}
\vspace{-1mm}
Existing SMT-based neural network verification tools can directly verify the composed neural network. 
The number of ReLU activation functions in the network is the primary factor in determining the verification time cost by the backend tools. In the occlusion part, the number of ReLU nodes is independent of the scale of the original networks to be verified. Therefore, our approach's scalability relies only on the underlying tools. 

To further improve the verification efficiency, we integrate two algorithmic acceleration techniques by dividing the verification problem into small independent sub-problems that can be solved separately. 

\vspace{1mm}
\noindent\textbf{Occlusion Space Splitting.} We observed that verifying the composed neural network with a large input space can significantly degrade the efficiency of backend verifiers. Even for small FNNs with only tens of ReLUs, the verifiers may run out of time due to the large occlusion space for searching. For instance, the complexity of Reluplex \cite{katz2017reluplex} can be derived from the original SMT method of Simplex \cite{nelder1965simplex}. It has a complexity of $\Omega(v\times m\times n)$, where $m$ and $n$ represent the number of constraints and variables, and $v$ represents the number of pivots operated in the Simplex method. In the worst case, $v$ can grow exponentially. Reduction in the search space can reduce the number of pivot operations, therefore significantly improving verification efficiency.

Based on the above observation, we can divide $[1,m]$ (\textit{resp.} $[1,n]$) into $k_m\in \mathbb{Z}^{+}$ (\textit{resp.} $k_n\in \mathbb{Z}^{+}$) intervals $[m_0,m_1],\ldots,[m_{k_m-1},m_{k_m}]$ (\textit{resp.} $[n_0,n_1],\ldots,[n_{k_n-1},n_{k_n}]$) and verify the problem on the Cartesian product of the two sets of intervals. 
\begin{equation}
	\begin{aligned}
		&\forall x' \in \mathbb{X}.\Phi(x') = \Phi(x)\equiv \textstyle\bigwedge^{(k_m-1,k_n-1)}_{(i,j)=(0,0)} \forall x' \in \mathbb{X}_{(i,j)}.\Phi(x') = \Phi(x),\ \text{where}\\ 
&\mathbb{X}=\textstyle\bigcup_{(i,j)=(0,0)}^{(k_m-1,k_n-1)}\mathbb{X}_{(i,j)}=\textstyle\bigcup_{(i,j)=(0,0)}^{(k_m-1,k_n-1)}\{\gamma_{\zeta, w\times h}(x, a, b)|m_{i}\leq a\leq m_{i+1},n_{j}\leq b\leq n_{j+1}\}.\\
	\end{aligned}
\end{equation}
In this way, we split the occlusion space into $k_m\times k_n$ sub-spaces. It is equivalent to prove $\forall x' \in \mathbb{X}.\Phi(x')$ for all $\mathbb{X}_{(i,j)}$ with $0\leq i<k_m$ and $0\leq j<k_n$, without losing the soundness and completeness. We call each verification instance a \emph{query}, which can be solved more efficiently than the one on the whole occlusion space by backend verifiers. Furthermore, another advantage of occlusion space splitting is that these divided queries can be solved in parallel by leveraging multi-threaded computing.  

\vspace{2mm}
\noindent\textbf{Eager Falsification by Label Sorting.} 
Another \textit{Divide \& Conquer} approach for acceleration is to divide the verification problem into independent sub-problems by the classification labels in $L$, as defined below: 
\begin{equation}
	\begin{aligned}
		&\forall x' \in \mathbb{X}.\Phi(x') = \Phi(x)\equiv \forall x' \in \mathbb{X}.\textstyle\bigwedge_{\ell'\in {L}}\Phi(x) = \ell'\vee \Phi(x') \neq \ell'.\label{eq:sort}
	\end{aligned}
\end{equation}
The dual problem to disprove the robustness can be solved to find some label $\ell'$ such that $\Phi(x) \neq \ell'\wedge \Phi(x') = \ell'$. 
We can first solve those that have higher probabilities of being non-robust. Once a sub-problem is proved non-robust, the verification terminates, with no need to solve the remainder. 
Such approach is called \emph{eager falsification} \cite{guo2021eager}. Based on this methodology, we sort the sub-problems in a descent order according to the probabilities at which the original image is classified to the corresponding labels by the neural network. A higher probability implies that the image is more likely to be classified to the corresponding label. Heuristically, there is a higher probability of finding an occlusion such that the occluded image is misclassified to that label. We sequence the queries into backend verifiers until all are verified, or a non-robust case is reported. Our experimental results will show that this approach can achieve up to 8 and 24 times speedup in the robust and non-robust cases, respectively.


\section{Implementation and Evaluation}
\label{sec:exp}

We implemented our approach in a Python tool called \textsc{OccRob}, using the PyTorch framework. As a backend tool, we chose the Marabou \cite{katz2019marabou} state-of-the-art, SMT-based DNN verifier.
We evaluated our proposed approach extensively on a suite of benchmark datasets, including MNIST \cite{lecun-mnisthandwrittendigit-2010} and GTSRB \cite{Houben-IJCNN-2013}. The size of the networks trained on the datasets for verification is measured by the number of ReLUs, ranging from 70 to 1300.  All the experiments are conducted on a workstation equipped with a 32-core AMD Ryzen Threadripper CPU @ 3.7GHz and 128 GB RAM and Ubuntu 18.04. We set a timeout threshold of 60 seconds for a single verification task. All code and experimental data, including the models and verification scripts can be accessed at \url{https://github.com/MakiseGuo/OccRob}.

We evaluate our proposed method concerning efficiency and scalability in the occlusion robustness verification of ReLU-based FNNs. Our goals are threefold:

\begin{enumerate}
	\item To demonstrate the effectiveness of the proposed  approach for the robustness verification against various types of occlusion perturbations.
	\item 
	To evaluate the efficiency improvement of the proposed approach, compared with the naive SMT-based method.
	\item 
	To demonstrate the effectiveness of the acceleration techniques in efficiency improvement. 
\end{enumerate}

\begin{table}[t]
	\centering
	\caption{Occlusion verification results on two medium FNNs trained on MNIST and GTSRB in different occlusion sizes $2\times 2$ and $5\times 5$ and occlusion radius $\epsilon$.}
	\label{table:Experiment 1 result}
	\setlength{\tabcolsep}{1mm}
	\footnotesize 
	\begin{threeparttable}
		\begin{tabular}{|l|c|R{9.5mm}|rrrr|R{9.5mm}|rrrr|} 
			\hline 
			&  \multicolumn{1}{c|}{}  & \multicolumn{5}{c|}{Medium FNN (600 ReLUs) on MNIST}  & \multicolumn{5}{c|}{Medium FNN (343 ReLUs) on GTSRB} \\ 
			\hline
			Size & \multicolumn{1}{c|}{$\epsilon$} & \multicolumn{1}{c|}{- / +} & \multicolumn{1}{c|}{$T_{+}$} & \multicolumn{1}{c|}{$T_{-}$} & \multicolumn{1}{c|}{$T_{\text{build}}$} &\multicolumn{1}{c|}{TO(\%)} & \multicolumn{1}{c|}{- / +} & \multicolumn{1}{c|}{$T_{+}$} & \multicolumn{1}{c|}{$T_{-}$} & \multicolumn{1}{c|}{$T_{\text{build}}$} &\multicolumn{1}{c|}{TO(\%)}\\ 
			\hline
			\multirow{5}{*}{$2\times 2$} 			
			&0.05  & \textbf{2} / 28  & 120.01  & 11.98 & 0.068  & 0.00 & \textbf{8} / 13  & 103.64  & 24.18 & 0.089  & 0.00  \\
			&0.10  & \textbf{3} / 27  & 121.37  & 19.18 & 0.067 & 0.00 & \textbf{8} / 13  & 108.62  & 22.57 & 0.088  & 0.00  \\
			&0.20  & \textbf{4} / 26  & 122.12  & 39.57 & 0.067  & 0.00 & \textbf{10} / 11  & 113.7  & 23.17 & 0.084  & 0.00  \\
			&0.30  & \textbf{4} / 26  & 122.74  & 39.85 & 0.068  & 0.00 & \textbf{11} / 10  & 117.97  & 26.41 & 0.089  & 0.00  \\
			&0.40  & \textbf{6} / 24  & 126.66  & 49.6 & 0.074  & 0.00 & \textbf{14} / 7  & 115.49  & 31.53 & 0.096  & 0.14  \\
			\hline
			\multirow{5}{*}{$5\times 5$} 
			&0.05  & \textbf{5} / 25  & 123.45 & 49.04 & 0.065  & 0.00 & \textbf{9} / 12  & 123.99  & 26.02 & 0.101  & 0.00  \\
			&0.10  & \textbf{6} / 24  & 124.13 & 44.09 & 0.073 & 0.00 & \textbf{12} / 9  & 127.65  & 26.96 & 0.01  & 0.00  \\
			&0.20  & \textbf{10} / 20  & 179.89  & 52.51 & 0.073  & 3.26 & \textbf{16} / 5  & 126.98  & 27.22 & 0.102  & 0.00  \\
			&0.30  & \textbf{14} / 16  & 284.67  & 65.98 & 0.076  & 5.45\ & \textbf{18} / 3  & 146.68  & 29.11 & 0.100  & 0.04  \\
			&0.40  & \textbf{22} / 8  & 339.78  & 97.28 & 0.074  & 7.33 & \textbf{19} / 2  & 169.17  & 26.52 & 0.103  & 0.09  \\
			\hline 			
		\end{tabular}
		\begin{tablenotes}
			\footnotesize
			\item[*] - / +: the numbers of non-robust and robust cases; $T_{+}$ (\textit{resp.} $T_{-}$): average verification time in robust (\textit{resp.} non-robust) cases; $T_{\text{build}}$: the building time of occlusion neural networks; TO (\%): the percentage of runtime-out cases among all the queries. 
		\end{tablenotes}
	\end{threeparttable}
	\vspace{-5mm}
\end{table}

\noindent\textbf{Experiment I: Effectiveness.}  We first evaluate the effectiveness of \occrob in robustness verification against various types of occlusions of different sizes and color ranges. Table \ref{table:Experiment 1 result} shows the verification results and time costs against multiform occlusions on two medium FNNs trained on MNIST and GTSRB. We consider two occlusions sizes, $2\times 2$ and $5\times 5$, respectively. 
The occluding color range is from 0.05 to 0.40. In each verification task, we selected the first 30 images from each of the two datasets and verified the network's robustness around them, under corresponding occlusion settings. As expected, larger occlusion sizes and occluding color ranges imply more non-robust cases. 
One can see that \occrob can almost always verify and falsify each input image, except for a few time-outs. The robust cases cost more time than the non-robust cases, but all can be finished in a few minutes. Note that the time overhead for building occlusion neural networks is almost negligible, compared with the verification time. 
The effectiveness against uniform occlusions is shown in the following experiment.

\begin{figure}[t]
\includegraphics[width=\textwidth]{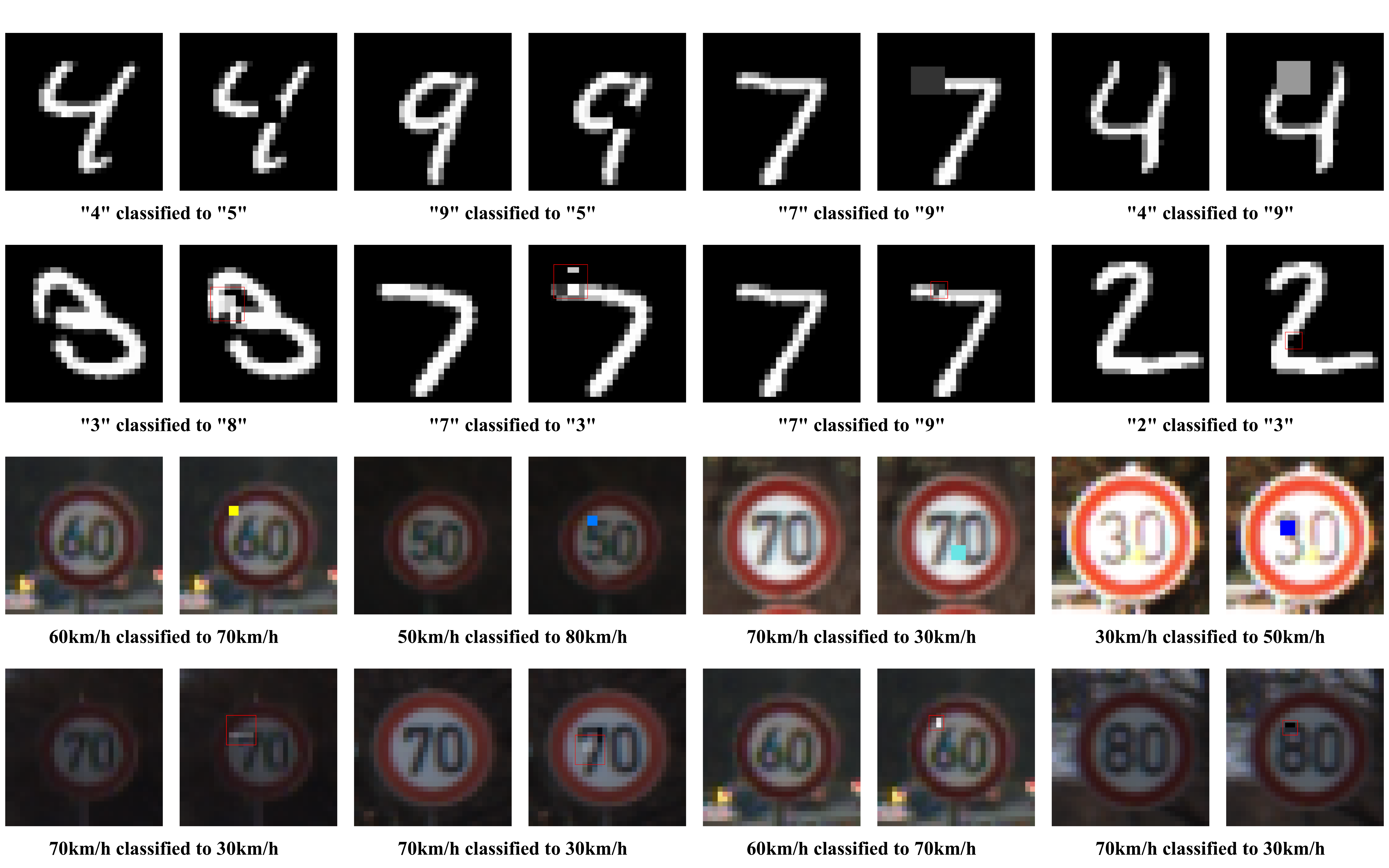}
\caption{Occlusive adversarial examples automatically generated for non-robust images.}
\label{fig:adv}
\vspace{-7mm}
\end{figure}

Fig. \ref{fig:adv} shows several occlusive adversarial examples that are generated by \occrob under different occlusion settings. 
These occlusions do not alter the semantics of the original images and should be classified to the same results as those non-occluded ones. However, they are misclassified to other results.

\vspace{1mm}
\noindent\textbf{Experiment II: Efficiency improvement over the naive encoding method.} We compare the efficiency of \occrob with that of a naive SMT encoding approach on verifying uniform occlusions since the naive encoding approach cannot handle verification against multiform occlusions.
We apply the same acceleration techniques, such as parallelization and a variant of input space splitting, to the naive approach, which otherwise times out for almost all verification tasks even on the smallest model.

Table \ref{table:expII} shows the average verification time on six FNNs of different sizes against uniform occlusions. We can observe that \occrob affords a significant improvement in efficiency, up to 30 times higher than the naive approach. 
It can always finish before the preset time threshold, while the naive method fails to verify the two large networks under the same time threshold. The timeout proportion of two medium networks is over 70\%. While the small network on MNIST only has an 8\% of timeout proportion with the naive method, \occrob barely timeouts on every network(see Appendix \ref{appendix:full data of experiment 2}).

\begin{table}[h!]
\vspace{-4mm}
	\caption{Performance comparison between \occrob (OR) and the naive (NAI) methods on MNIST and GTSRB under different occlusion sizes.}
	\label{table:expII}
	\setlength{\tabcolsep}{0.6mm}
	\footnotesize  
	\begin{tabular}{|c|rr|rr|rc|rr|rr|rc|} 
		\hline 
		 & \multicolumn{6}{c|}{MNIST} & \multicolumn{6}{c|}{GTSRB}\\\hline 
		FNNs& \multicolumn{2}{c|}{Small FNN} & \multicolumn{2}{c|}{Medium FNN} & \multicolumn{2}{c|}{Large FNN} & \multicolumn{2}{c|}{Small FNN} & \multicolumn{2}{c|}{Medium FNN} & \multicolumn{2}{c|}{Large FNN} \\
		\hline  
		Size & OR~~ &  NAI~~~ & OR~~~ & NAI~~ & OR~~ & NAI & OR~~ & NAI~~ & OR~~ & NAI~~ & OR~~ & NAI  \\\hline 
		$1\times 1$ & 46.44 & 63.12  & 110.18  & 759.93 & 206.50  & TO  & 29.76 & 472.23 & 69.28& 989.08 & 173.62 & TO\\ 
		$2\times 2$ & 49.62 & 165.53 & 98.60 & 832.98 & 199.17& TO  & 21.04 & 340.89 & 42.16  & 680.81 & 103.42& TO\\ 
		$3\times 3$ & 51.23 & 298.59 & 111.14& 863.74 & 205.67& TO  & 11.93 & 169.35 & 32.00  & 499.31 & 81.17& TO\\ 
		$4\times 4$ & 44.78 & 256.22 & 115.99& 886.73 & 225.02& TO  & 8.90  & 141.85 & 31.24& 419.62 & 106.41& TO\\ 
		$5\times 5$ & 48.96 & 270.23 & 113.01& 803.40 & 264.79& TO  & 6.11 & 190.81 & 27.97& 418.56 & 118.99& TO\\ 
		$6\times 6$ & 47.81 & 318.28 & 127.98& 642.01 & 288.18& TO  & 7.49 & 213.35 & 21.70& 282.04 & 60.02& TO\\ 
		$7\times 7$ & 34.99 & 357.78 & 124.47& 589.41 & 222.65& TO  & 6.02 & 153.81 & 31.96& 404.18 & 62.60& TO\\ 
		$8\times 8$ & 36.05 & 324.34 & 129.27& 469.24 & 215.53& TO  & 5.99 & 123.07 & 28.44& 250.97 & 54.37& TO\\ 
		$9\times 9$ & 34.58 & 224.01 & 141.54& 375.97 & 219.61& TO  & 6.42 & 102.39 & 31.30& 160.84 & 59.87& TO\\ 
		$10\times 10$& 28.98 & 178.44 & 78.89 & 398.01 & 182.36  & TO& 6.61 & 127.20 & 28.59 &  153.96& 40.69& TO \\
		\hline 
	\end{tabular}
\vspace{-3mm}
\end{table}

\vspace{1mm}
\noindent\textbf{Experiment III: Effectiveness of the integrated acceleration techniques.}
We finally evaluate the effectiveness of the two acceleration techniques integrated with the tool. 
We evaluate each technique separately by excluding it from \occrob and comparing the verification time of \occrob and the corresponding excluded versions. 
Fig.~\ref{fig:result of experiment 3} shows the experimental results of verifying the medium FNN trained on GTSRB against multiform occlusions by the tools. Fig.~\ref{fig:result of experiment 3} (a) shows that label sorting can improve efficiency in both robust and non-robust cases. In particular, the improvement is more significant in the non-robust case, with up to 5 times speedup in the experiment. That is because solving each query is faster than solving all simultaneously, and further \occrob immediately stops dispatching queries once a counterexample is found in the non-robust case. 
Fig.~\ref{fig:result of experiment 3} (b) shows that occlusion space splitting can also significantly improve the efficiency, with up to 8 and 24 times speedups in the robust and non-robust cases, respectively. In addition, 
Fig.~\ref{fig:result of experiment 3} (b) also shows a significant reduction in the number of time-outs.

\begin{figure}[h!]
	\vspace{-6mm}
	\centering
	\includegraphics[width=\textwidth]{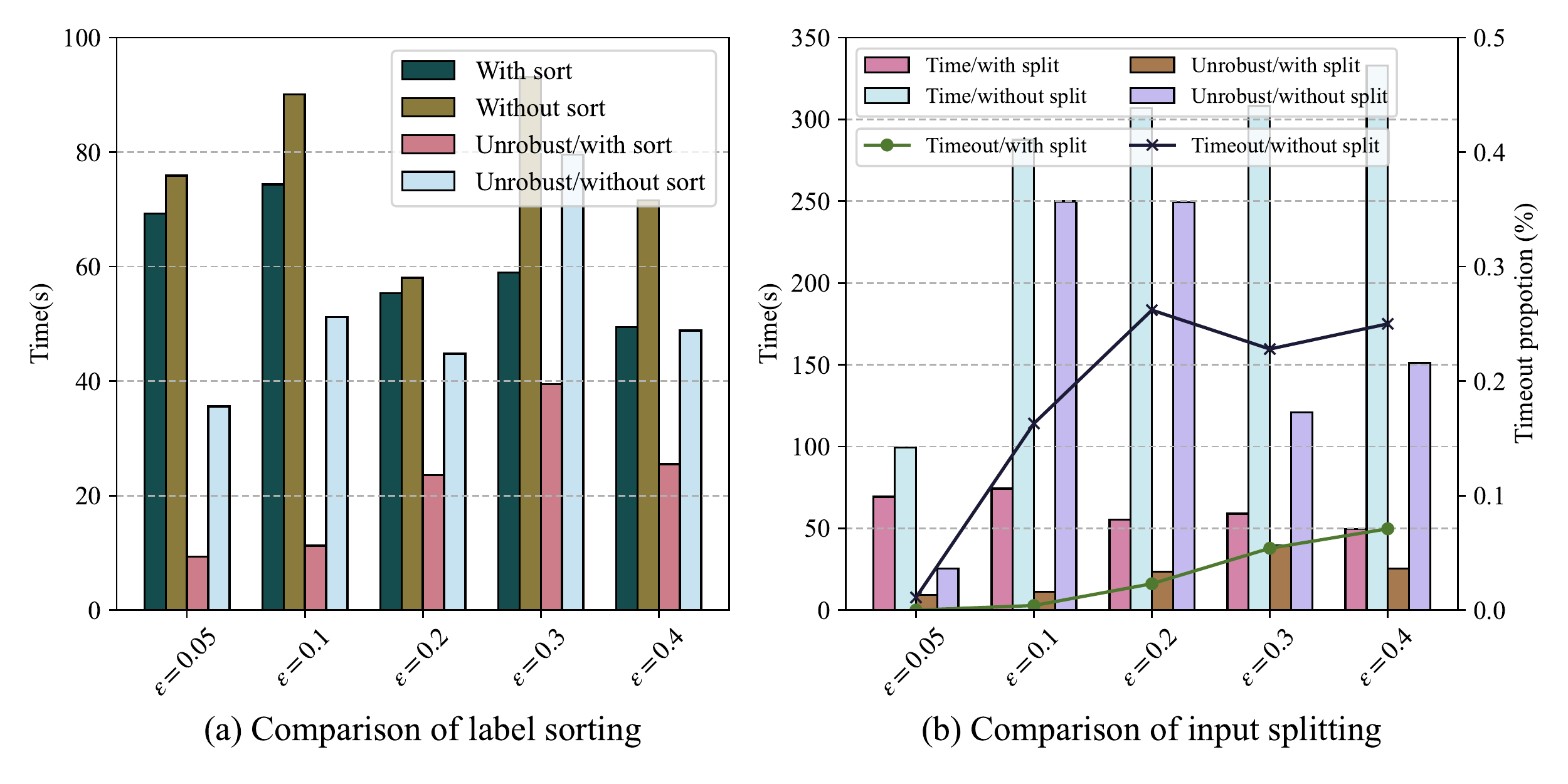}
	\vspace{-8mm}
	\caption{Efficiency evaluation results of the two acceleration techniques.}
	\vspace{-8mm}
	\label{fig:result of experiment 3}
\end{figure}


\vspace{-2mm}
\section{Related Work}
\vspace{-1mm}
\label{sec:rel}
Robustness verification of neural networks has been extensively studied recently, aiming at devising efficient methods for verifying neural networks' robustness against various types of perturbations and adversarial attacks. We classify those methods into two categories according to the type of perturbations, which can be semantic or non-semantic. Semantic perturbation has an interpretable meaning, such as occlusions and geometric transformations like rotation, while non-semantic perturbation means that noises perturb inputs with no particular meanings. 

Non-semantic perturbations are usually represented as $L_p$ norms, which define the ranges in which an input can be altered. Some robustness verification approaches for non-semantic perturbations are both sound and complete by leveraging SMT \cite{katz2017reluplex,amir2021smt} and MILP (mixed integer linear programming) \cite{singh2018robustness} techniques, while some sacrifice the completeness for better scalability by over-approximation \cite{lyu2020fastened,boopathy2019cnn,elboher2020abstraction}, abstract interpretation \cite{raghunathan2018certified,gehr2018ai2,cohen2019certified}, interval analysis by symbolic propagation  \cite{wang2018formal,NEURIPS2018_2ecd2bd9,li2019analyzing}, etc.

In contrast to a large number of works on non-semantic robustness verification, there are only a few studies on the semantic case. Because semantic perturbations are beyond the range of $L_p$ norms \cite{fischer2020certified}, those abstraction-based approaches cannot be directly applied to verifying semantic perturbations. Mohapatra et al. \cite{mohapatra2020towards} proposed to verify neural networks against semantic perturbations by encoding them into neural networks. Their encoding approach is general to a family of semantic perturbations such as brightness and contrast changes and rotations. Their approach for verifying occlusions is restricted to uniform occlusions at integer locations. Sallami et al.\cite{mziou2019safety} proposed an interval-based method to verify the robustness against the occlusion perturbation problem under the same restriction.
Singh et al.~\cite{singh2019abstract} proposed a new abstract domain to encode both non-semantic and semantic perturbations such as rotations.  
Chiang et al.~\cite{chiang2020certified} called occlusions \emph{adversarial patches} and proposed a certifiable defense by extending interval bound propagation (IBP) \cite{gowal2018effectiveness}. Compared with these existing verification approaches for semantic perturbations, our SMT-based approach is both sound and complete, and meanwhile, it supports a larger class of occlusion perturbations.

\vspace{-2mm}
\section{Conclusion and Future Work}
\label{sec:con}
\vspace{-1mm}
We introduced an SMT-based approach for verifying the robustness of deep neural networks against various types of occlusions. An efficient encoding method was proposed to represent occlusions using neural networks, by which we reduced the occlusion robustness verification problem to a regular robustness verification problem of neural networks and leveraged \emph{off-the-shelf} SMT-based verifiers for the verification. We implemented a resulting prototype \textsc{OccRob} and intensively evaluated its effectiveness and efficiency on a series of neural networks trained on the public benchmarks, including MNIST and GTSRB. 
Moreover, as the scalability of DNN verification engines continues to improve, our approach, which uses them as blackbox backends, will also become more scalable.

As our occlusion encoding approach is independent of target neural networks, we believe it can be easily extended to other complex network structures, such as convolutional and recurrent ones, which only depend on the backend verifiers. 
It would also be interesting to investigate how the generated adversarial examples could be used for neural network repairing \cite{usman2021nn,islam2020repairing} to train more robust networks.

\vspace{-2mm}
\section*{Acknowledgments} 
\vspace{-1mm}
 This work has been supported by National Key Research Program (2020AAA0107800), NSFC-ISF Joint Program (62161146001, 3420/21) and NSFC projects (61872146, 61872144), Shanghai  Science and Technology Commission (20DZ1100300), Shanghai Trusted Industry Internet Software Collaborative Innovation Center and ``Digital Silk Road" Shanghai International Joint Lab of Trustworthy Intelligent Software (Grant No. 22510750100).


\bibliographystyle{splncs04}
\bibliography{tacas2023}
\clearpage 

\appendix
\renewcommand{\thelemma}{\arabic{lemma}}

\section{Proof for NP-completeness}
\label{appendix:proof-of-np-completeness}

\subsection{Claim}
Let $F:\mathbb{R}^{m\times n}\rightarrow\mathbb{R}^r$ be a ReLU-based FNN and let $\gamma_{\zeta,\ w\times\ h}\left(x,a,b\right)$ be the occlusion function, where $x$ is an $m\times\ n$ image, $\left(a,b\right)$ is the top-left point coordinate of the occlusion rectangle, $w$ and $h$ is the width and height of occlusion rectangle, and $\zeta$ is coloring function. We say that the ReLU-based FNN $F$ is locally occlusion robust on $x$ with  $\gamma_{\zeta,\ w\times\ h}$ if for all $1\le a\le n\ and\ 1\le b\le m$, there is $\mathrm{\Phi}\left(F\left(x\right)\right)=\mathrm{\Phi}\left(F\left(\gamma_{\zeta,\ w\times\ h}\left(x,a,b\right)\right)\right)$.

\claim The problem of determining whether a ReLU-based FNN $F$ is locally occlusion robust on input image $x$ w.r.t. an occlusion function $\gamma_{\zeta,\ w\times\ h}$ is NP-complete.

\vspace{-2mm}

\subsection{Proof}
\vspace{-1mm}
Following the proof scheme for NP-complete problems in \cite{kleinberg2006algorithm}, to prove the NP-completeness of the problem of verifying the local occlusion robustness of ReLU-based FNNs, it is necessary to show that the problem is in NP and is NP-hard.

\subsubsection{NP-membership}
\vspace{-1mm}

\begin{lemma}
\label{lemma:np-membership}
    The problem of verifying the local occlusion robustness of ReLU-based FNNs is in NP.
\end{lemma}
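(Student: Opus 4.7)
The plan is to prove NP-membership by exhibiting a polynomial-size certificate of non-robustness together with a polynomial-time verification procedure. Following the standard convention in the DNN verification literature (as in Katz et al.~\cite{katz2017reluplex}), the decision problem is polynomial-time equivalent to the satisfiability problem of whether there exist occlusion parameters that cause a misclassification; I will show this latter witness problem lies in NP, which is sufficient to place the robustness problem in the same complexity class under this convention.

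The certificate I propose consists of three parts: (i) a target label $\ell' \in L$ with $\ell' \neq \Phi_F(x)$; (ii) the occlusion parameters, namely the position $(a,b)$ and, in the multiform case, the coloring values $\zeta$; and (iii) the ReLU activation pattern, i.e.\ a bit-vector recording for each hidden neuron of $F$ whether its pre-activation is nonnegative or negative on the induced occluded input. The verification procedure would then (1) compute the occluded image $x' = \gamma_{\zeta, w\times h}(x,a,b)$ using Definition~\ref{definition:occlusion function}, which requires only $O(m\cdot n)$ arithmetic operations; (2) forward-propagate $x'$ through $F$, checking at each hidden neuron that the sign of the pre-activation agrees with the claimed activation pattern (this simultaneously validates part (iii) and computes $F(x')$); and (3) check that $F(x')_{\ell'} \geq F(x')_{\Phi_F(x)}$, witnessing misclassification. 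Each step is clearly polynomial in the size of the network, the image, and the certificate.

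The main obstacle will be bounding the bit-length of the real-valued components of the certificate, namely $(a,b)$ in the continuous-position setting and the coloring offsets $\zeta$ in the multiform setting. For integer positions, this is trivial since $(a,b) \in \{1,\ldots,n\} \times \{1,\ldots,m\}$. For the real-valued and multiform cases, the argument is as follows: once the activation pattern is fixed and the discrete choice of which pixels are fully covered, partially covered, or uncovered is fixed, the interpolation formula in Equation~\ref{eq:interpolation} becomes linear in $(a,b)$ (each absolute value and $\max$ is resolved by the region selection), and the coloring function in the multiform case reduces to $\zeta(x,i,j) = x_{i,j} + \Delta_p$, which is linear in the $\Delta_p$'s. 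Consequently the entire composition $F \circ \gamma$ becomes linear, and the non-robustness condition becomes a system of linear inequalities in $(a,b,\zeta)$.

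The feasibility of this linear system is an LP problem whose input is of polynomial size in the network and image; by classical LP theory, any feasible LP over rational data admits a basic feasible solution of polynomial bit-length. I would therefore take the certificate's real-valued parameters to be such a basic feasible solution, which has polynomial description length, yielding the required polynomial-size certificate. Combined with the polynomial-time verification procedure above, this establishes that the problem lies in NP.
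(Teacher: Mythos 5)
Your core argument is the same as the paper's: the certificate is the occlusion parameters, and the verifier computes the occluded image via $\gamma_{\zeta,w\times h}$, forward-propagates through $F$, and checks the classification --- all in polynomial time. Where you go beyond the paper is in worrying about the bit-length of the real-valued certificate components; the paper's proof simply asserts ``assign $a,b$ and $\zeta$'' as the witness without bounding their representation, so your LP/basic-feasible-solution argument is a genuine tightening rather than a digression (your inclusion of the activation pattern and target label in the certificate is redundant for the verification step but is exactly what makes the bit-length argument go through).

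There is, however, one sub-case where your key linearity claim fails. In the multiform case, Equation~\ref{eq:xij} reduces to $x'_{i,j} = x_{i,j} + s_{i,j}\,\Delta_p$, and at real-valued occlusion positions a partially covered pixel has $0 < s_{i,j} < 1$ with $s_{i,j}$ affine in $(a,b)$ once the region is fixed. The term $s_{i,j}\,\Delta_p$ is therefore \emph{bilinear} in the certificate variables $(a,b)$ and $\Delta_p$, not linear --- this is precisely why the paper resorts to the $a\times b = \exp(\log a + \log b)$ device in its encoding. So the feasibility system you obtain after fixing the discrete structure is a bilinear program, and the classical bound on basic feasible solutions of rational LPs does not directly apply. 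Your argument is sound for uniform occlusions (where $x_{i,j}-\zeta$ is a constant, so $x'$ is affine in $(a,b)$) and for integer positions (where every $s_{i,j}\in\{0,1\}$), which covers the case the paper actually uses for its hardness reduction; but as written it does not establish NP-membership for multiform occlusions at real-valued positions. A patch would be to handle that case separately, e.g.\ by additionally fixing which of $(a,b)$ or $\Delta$ sits at a vertex of its feasible region, or by restricting the lemma's scope as the paper implicitly does.
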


\begin{proof}
We show that the problem is in NP. First, we assign $a,b$ as the top-left coordinate of the occlusion rectangle and $\zeta$ as the coloring function. Next, we get input assignment $x'=\gamma_{\zeta,\ w\times\ h}\left(x,a,b\right)$, which is polynomial in the size of $x$ by processing one pixel after another. Then, we feed the input matrix $x$ and $x'$ forward through the network $F$ respectively, which is polynomial in the size of the network $F$\cite{katz2017reluplex,salzer2021reachability}. Finally, we check whether $ \mathrm{\Phi}\left(F\left(x^\prime\right)\right)=\mathrm{\Phi}\left(F\left(x\right)\right)$ holds, which is polynomial in the size of $F\left(x\right)$ or $F\left(x'\right)$.

Therefore, we can check whether $x'=\gamma_{\zeta,\ w\times\ h}$ and $x$ have the same classified result under an assignment $a$,$b$ and $\zeta$ in polynomial time.
\end{proof}

\subsubsection{NP-hardness}
\ 

It takes two steps to prove the NP-hardness of the problem.

First, we find the problem of verifying the local robustness of ReLU-based OMNNs as a known NP-complete problem by explaining that it is an NNReach problem\cite{salzer2021reachability}.

Secondly, we reduce the problem of verifying the local robustness of ReLU-based OMNNs to the problem of verifying the local occlusion robustness of ReLU-based FNNs in polynomial time.

\begin{figure}[htbp]
\begin{minipage}[t]{0.4\linewidth}
\centering
\includegraphics[scale=0.35, trim=340 120 330 95, clip]{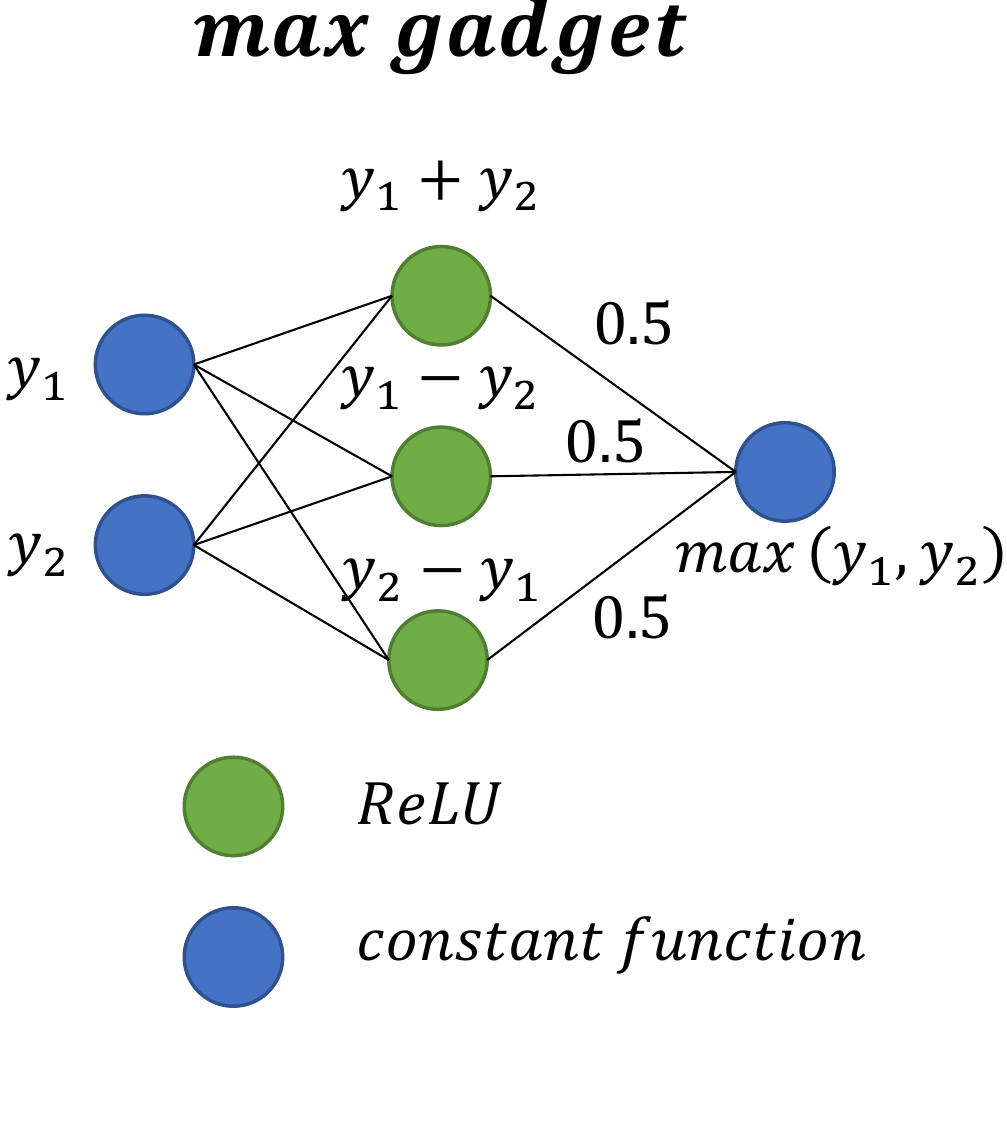}
\caption{The max gadget on ReLUs}
\label{fig:max-gadget}
\end{minipage}
\hfill
\begin{minipage}[t]{0.6\linewidth}
\centering
\includegraphics[scale=0.35, trim=270 70 150 120, clip]{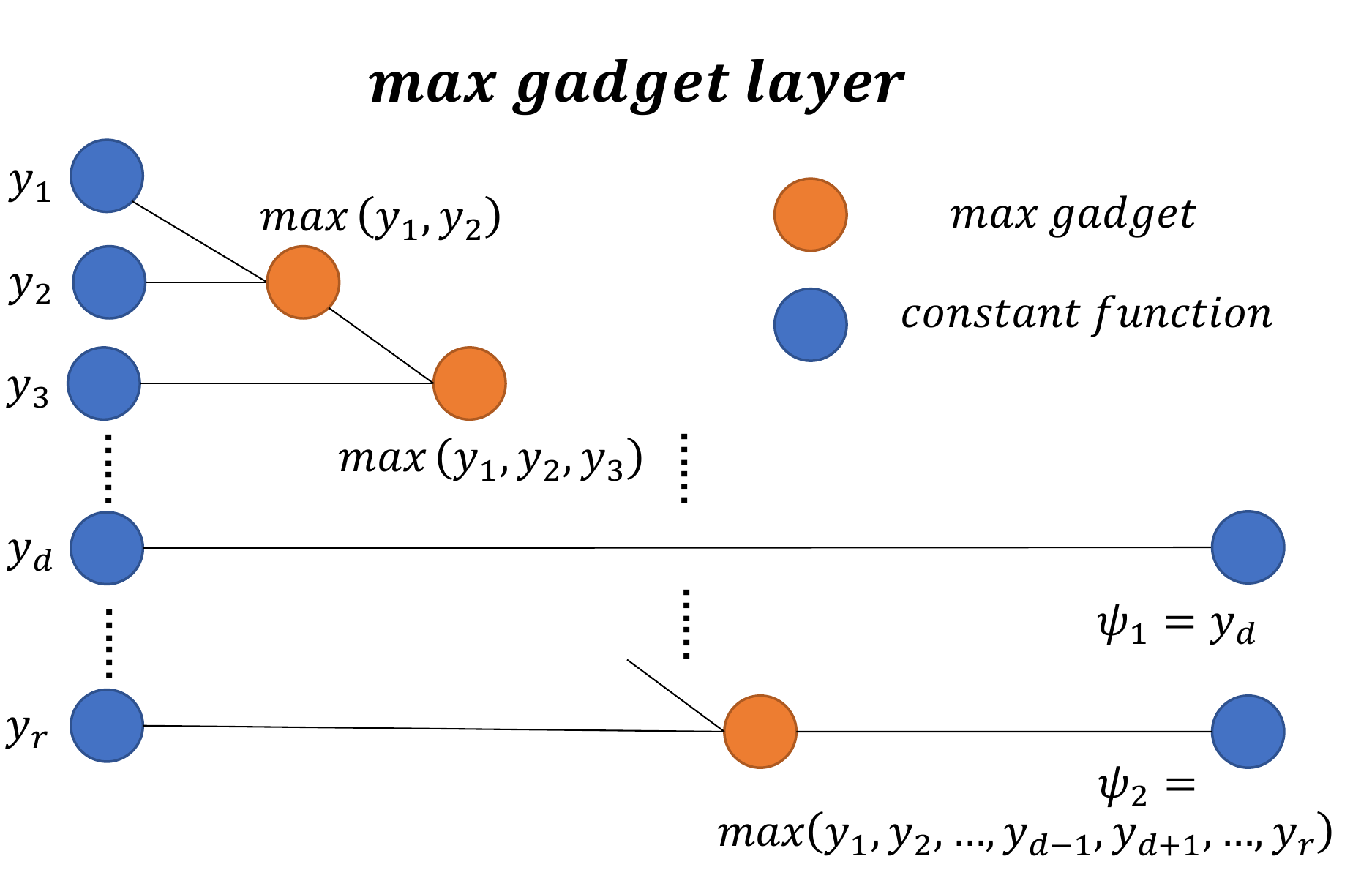}
\caption{The max gadget layer on ReLUs}
\label{fig:max-gadget-layer}
\end{minipage}
\end{figure}

\paragraph{The problem of verifying the local robustness of ReLU-based OMNNs is NP-complete}
\ 

Let $O:\mathbb{R}^{4+ct}\rightarrow\mathbb{R}^{m\times n}$ be the occlusion neural network, whose construction can be seen in Sec. \ref{subsec:encoding}.

Let $M:\mathbb{R}^r\rightarrow\mathbb{R}^2$ be the max gadget layer. For a given constant $d$ and a vector with $r$ elements where $Y$ represents the vector and $y_i$ denotes the $i$-th element in $Y$, it can return $\psi_1=y_d$ and the max value among other elements, namely, $\psi_2=max(y_1,y_2,\ldots,y_{d-1},y_{d+1},\ldots,y_r)$.

Fig. \ref{fig:max-gadget} shows a max gadget based on ReLUs. It can return the larger value between two inputs. The max gadget is only suitable for non-negative values. The max gadget is capable enough because ReLUs will return non-negative values for any inputs..

Fig. \ref{fig:max-gadget-layer} shows the max gadget layer, consisting of max gadgets. It can return the value of $y_d$ and the max value among the others $max\left(y_1,y_2,\ldots,y_{d-1},y_{d+1},\ldots,y_r\right)$.

\begin{definition}[Local robustness of ReLU-based OMNNs]
\label{def:robustness-of-omnn}

Given a ReLU-based FNN $F:\mathbb{R}^{m\times n}\rightarrow\mathbb{R}^r$, an occlusion neural network $O$, and a max gadget layer $M$, the neural network $S=M\circ\ F\circ\ O$ is called a ReLU-based OMNN, i.e., ReLU-based FNN with the occlusion neural network and the max gadget layer.

Let $ \mathrm{\Theta}$(resp. $\mathrm{\Psi}$) be the input(resp. output) vector of $S$ and $\theta_i$(resp. $\psi_i$) be the $i$-th element of $\mathrm{\Theta}$(resp. $\mathrm{\Psi}$). A ReLU-based OMNN $S$ is not locally robust for a $m\times\ n$ image $x$ and correct classification label $d=\mathrm{\Phi}\left(F\left(x\right)\right)$, if there exists $l_i\le\theta_i\le r_i\ for\ all\ i=1,2,\ldots,4+ct$, such that $\psi_1-\psi_2\le0$, where $l_i$ and $r_i$ are given constants; otherwise, the OMNN $S$ is locally robust.

\end{definition}

Definition 4 means that when there exists an input $\mathrm{\Theta}$ within the hyperrectangle of input space, letting $(F\circ\ O\left(\mathrm{\Theta}\right))_d$ not be the largest value among $F\circ\ O\left(\mathrm{\Theta}\right)$, the OMNN is not locally robust and has an unrobust counterexample $\mathrm{\Theta}$. We can also learn that the problem of verifying the local robustness of ReLU-based OMNNs is to determine whether a ReLU-based OMNN $S=M\circ\ F\circ\ O$ is locally robust or not.

The NNReach problem is a reachability problem of determining whether an input specification $\varphi_{in}\left(x_1,x_2,\ldots,x_n\right)$ and an output specification $\varphi_{out}\left(F\left(x_1,x_2,\ldots,x_n\right)\right)$ are both satisfied for a given neural network $F$, where a specification $\varphi\left(x_1,x_2,\ldots,x_n\right)$ is a system of linear constraints on variables $x_1,x_2,\ldots,x_n$. The NNReach problem of ReLU-based DNNs, or judging whether properties satisfy on a ReLU-based DNN is proven to be an NP-complete problem \cite{katz2017reluplex,salzer2021reachability}.

\setcounter{lemma}{0}
\renewcommand{\thelemma}{2.\arabic{lemma}}
\begin{lemma}
\label{lemma:npc-of-omnn}
    The problem of verifying the local robustness of ReLU-based OMNNs is NP-complete.
\end{lemma}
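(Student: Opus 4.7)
The plan is to prove the two standard halves of NP-completeness independently.

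For NP-membership, the argument closely parallels that of Lemma~\ref{lemma:np-membership}. A nondeterministic verifier guesses an input vector $\Theta$ satisfying $l_i \le \theta_i \le r_i$, evaluates $S(\Theta) = M(F(O(\Theta)))$ by forward propagation through the three polynomially-sized ReLU subnetworks, and checks the single inequality $\psi_1 - \psi_2 \le 0$. Each forward pass runs in time polynomial in the size of the respective network, so the whole certificate check is polynomial, placing the problem in NP.

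For NP-hardness, I will reduce from the NNReach problem on ReLU-based FNNs, which the excerpt recalls is NP-complete~\cite{katz2017reluplex,salzer2021reachability}. Given an NNReach instance $(F', \varphi_{in}, \varphi_{out})$, I may assume without loss of generality that $\varphi_{in}$ is a box $\prod_i[\alpha_i, \beta_i]$ and $\varphi_{out}$ is a single halfspace $c^\top y \le 0$, as in the standard 3SAT-to-Reluplex hardness construction. I will construct an OMNN $S = M \circ F \circ O$ with bounds $\{(l_i, r_i)\}$ and target label $d$ in polynomial time so that $S$ is non-robust iff the NNReach instance is satisfiable. The occlusion network $O$ is instantiated with occlusion size $(w, h) = (m, n)$ and fixed position $(a, b) = (1, 1)$, placing a full-image multiform occlusion over a reference image $x$ chosen coordinatewise as $x_i = (\alpha_i + \beta_i)/2$; by inspection of the weights and biases in Sec.~\ref{subsec:encoding}, every pixel's occlusion factor $s_{i,j}$ equals $1$ under this setting, so Equation~\ref{eq:xij} collapses to $x' = x + \zeta$ and $O$ behaves as the affine map $\zeta \mapsto x + \zeta$. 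Letting $\zeta$ range over $\prod_i [\alpha_i - x_i, \beta_i - x_i]$ then realizes precisely the NNReach input box. I take $F$ to be $F'$ composed with a constant-size ReLU adapter that converts the single output $c^\top y$ into a two-coordinate vector whose argmax encodes the sign of $c^\top y$, so that the max gadget layer $M$ yields $\psi_1 \le \psi_2$ iff $c^\top y \le 0$. All added layers are polynomial-size, giving the desired polynomial-time reduction.

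The main obstacle will be rigorously establishing the affine-collapse property of $O$ — that under the full-image, fixed-position instantiation the map $\zeta \mapsto O(\Theta)$ really is an affine bijection on the relevant input box, despite the many ReLU nonlinearities layered inside $O$. This will require a careful case analysis of the first through fourth hidden layers of $O$ at the boundary setting $(a, w, b, h) = (1, m, 1, n)$, using Lemma~\ref{definition:correctness of the encoding approach} as the anchor to confirm that every ReLU on the relevant activation region behaves as the identity on its positive branch, leaving the composed mapping affine in $\zeta$. Once this collapse is established, and the straightforward max-gadget-based adapter for $\varphi_{out}$ is verified to behave as claimed, the remainder of the reduction is a routine padding argument that transports the NNReach instance through the composed OMNN unchanged.
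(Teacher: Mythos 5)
Your proposal is correct in substance but takes a genuinely different route from the paper's. The paper proves this lemma in the opposite direction: it observes that, by Definition~\ref{def:robustness-of-omnn}, deciding local robustness of $S=M\circ F\circ O$ is exactly an NNReach query with input specification $1\le\theta_1\le n\wedge\theta_2=w\wedge\ldots$ and output specification $\psi_1-\psi_2\le 0$, i.e., it exhibits the OMNN problem as an \emph{instance} of NNReach and concludes NP-completeness from the NP-completeness of NNReach. That argument yields NP-membership immediately (solve the induced NNReach query), but being a special case of an NP-complete problem does not by itself give NP-hardness, so the hardness half is left implicit in the paper. Your proposal supplies exactly that direction: an explicit polynomial-time reduction \emph{from} NNReach \emph{to} OMNN robustness, using a full-image, fixed-position multiform occlusion so that every occlusion factor $s_{i,j}$ equals $1$, Lemma~\ref{definition:correctness of the encoding approach} collapses $O$ to the affine map $\zeta\mapsto x+\zeta$, and the per-coordinate bounds $l_i\le\theta_i\le r_i$ of Definition~\ref{def:robustness-of-omnn} realize the NNReach input box. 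This is the logically stronger argument; what the paper's phrasing buys is brevity, since its real hardness burden is discharged only later by the reduction onward to occlusion robustness of FNNs. Two details you should still pin down when writing this up: (i) Definition~\ref{def:robustness-of-omnn} ties the target label to $d=\Phi(F(x))$, so you must either verify that your reference image $x$ (the box center) is classified to the intended coordinate, or pre-evaluate $F'$ at $x$ and emit a trivially non-robust instance when the center already violates the output halfspace; and (ii) the max gadget of Fig.~\ref{fig:max-gadget} is stated to be correct only for non-negative inputs, so your two-coordinate adapter for $c^{\top}y\le 0$ needs an explicit non-negativity shift before $M$ is applied.
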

\setcounter{lemma}{1}
\renewcommand{\thelemma}{\arabic{lemma}}

\begin{proof}
According to the Definition \ref{def:robustness-of-omnn} and our encoding approach in Sec. \ref{subsec:encoding}, there is a ReLU-based neural network $S=M\circ\ F\circ\ O$ with input specification $\varphi_{in}\left(\mathrm{\Theta}\right): 1\le\theta_1\le n\land\theta_2=w\land1\le\theta_3\le m\land\theta_4=h\land\bigwedge_{i=5}^{4+ct}{\theta_i=l_i}$ and output specification $\varphi_{out}\left(\mathrm{\Psi}\right): \psi_1-\psi_2\le0$. Now the problem of verifying the local robustness of ReLU-based OMNNs is to determine whether there is $\mathrm{\Theta}\in\mathbb{R}^{4+ct}$ such that $\varphi_{in}\left(\mathrm{\Theta}\right)$ and $\varphi_{out}\left(\mathrm{\Psi}\right)$ are true, where $ \mathrm{\Psi}=S\left(\mathrm{\Theta}\right)$, which conforms to the definition of the NNReach problem. The problem of verifying the local robustness of ReLU-based OMNNs is an instance of the NNReach problem. Since the NNReach problem of ReLU-based DNNs is NP-complete, the problem of verifying the local robustness of ReLU-based OMNNs is also NP-complete.
\end{proof}

\paragraph{The reduction from verifying the local robustness of ReLU-based OMNNs to verifying the local occlusion robustness of ReLU-based FNNs}
\ 

Considering an instance of verifying local robustness in ReLU-based OMNNs with the OMNN $S=M\circ\ F\circ\ O$ and the network input $\mathrm{\Theta}$, where $1\le\theta_1\le n,\ \theta_2=w,\ 1\le\theta_3\le m,\ \theta_4=h$ and $\theta_i=l_i,\ for\ i=5,6,\ldots,4+ct$.

For verifying local occlusion robustness of ReLU-based FNNs, there is an FNN $F$, occlusion function $\gamma_{\zeta,\ w\times\ h}$, the coloring function $\zeta$ and occlusion size $w\times h$, the top-left point coordinate of occlusion rectangle $\left(a,b\right)$, and an input image $x$ with size $m\times\ n$.

We can reduce the former problem to the later one by assigning $a$ to be $\theta_1$, $w$ to be $\theta_2$, $b$ to be $\theta_3$, and $h$ to be $\theta_4$. The FNN $F$ can also be derived from $S=M\circ\ F\circ\ O$ by removing layers $M$ and $O$. The element of input matrix $x$ is in the bias in the last layer of occlusion neural network $O$, as explained in Sec. \ref{subsec:encoding}. After that, since we have already known $a$, $b$, $w$ and $h$, in order to get the coloring function $\zeta$, for multiform case, we assign the occlusion $\vartheta$ to be $\theta_i,\ for\ i=5,6,\ldots,4+ct$, so that $\zeta$ can be computed in the formula in Definition \ref{definition:occlusion-robustness} . And for the uniform case, we let $\zeta\left(x,i,j\right)=\mu$ where $\mu$ is the sum of the diagonal element in the weight and its according element in the bias in the last layer, e.g., $W_4\left(1,1\right)+{\sf b}_4\left(1\right)$. Finally, the occlusion function $\gamma$ has already had all the necessary arguments $x$, $a$, $b$, $w$, $h$ and $\zeta$.  In short, we have all the arguments needed to verify the local occlusion robustness of ReLU-based FNNs, and all the reduction steps above can finish in polynomial time. Therefore, we have shown that the problem of verifying the local occlusion robustness of ReLU-based FNNs is NP-hard. Thurs, we proved the Lemma \ref{lemma:np-hardness} below.

\begin{lemma}
\label{lemma:np-hardness}
    The problem of verifying the occlusion robustness of ReLU-based FNNs is NP-hard.
\end{lemma}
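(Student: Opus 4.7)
The plan is to establish NP-hardness by a polynomial-time reduction from the problem of verifying local robustness of ReLU-based OMNNs, shown to be NP-complete in Lemma~\ref{lemma:npc-of-omnn}. Given any OMNN instance $S = M \circ F \circ O$ with input specification $\varphi_{in}(\Theta)$ of the form $1 \le \theta_1 \le n \wedge \theta_2 = w \wedge 1 \le \theta_3 \le m \wedge \theta_4 = h \wedge \bigwedge_{i=5}^{4+ct} \theta_i = l_i$, I would manufacture a corresponding instance of the occlusion robustness verification problem as follows: strip off the appended max-gadget layer $M$ and the prepended occlusion network $O$ to isolate the bare FNN $F$; read the input image $x$ directly from the biases of the last layer of $O$, which by the construction of Sec.~\ref{subsec:encoding} store exactly the raw pixel values $\mathbf{x}$; and lift the occlusion parameters $(a, w, b, h)$ from the bounds on $(\theta_1, \theta_2, \theta_3, \theta_4)$ while reconstructing the coloring function $\zeta$ from the fixed values $l_5, \dots, l_{4+ct}$, following Definition~\ref{definition:occlusion function} in the multiform case or, in the uniform case, by reading the single color $\mu$ off the corresponding weight-plus-bias entry of $W_4$ and $\overline{\sf b}_4$.

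Each of these extraction steps operates on the explicit description of $S$ in time linear in the size of its encoding, so the reduction is clearly polynomial. The key step is then to verify its correctness, namely that $F$ is locally occlusion robust on $x$ with respect to $\gamma_{\zeta, w\times h}$ if and only if $S$ is locally robust on the specification $\varphi_{in}$. This should fall out from chaining Lemma~\ref{definition:correctness of the encoding approach}, which already certifies that $O(a, w, b, h, \zeta) = \gamma_{\zeta, w\times h}(x, a, b)$, with the semantics of the max-gadget layer $M$, whose output condition $\psi_1 - \psi_2 \le 0$ in Definition~\ref{def:robustness-of-omnn} encodes precisely that some label distinct from the true one has probability at least as high. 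Consequently, any satisfying $\Theta$ for the OMNN non-robustness query yields a witness $(a, b)$ to non-robustness of $F$ on $x$ under the constructed occlusion function, and conversely.

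The main obstacle, as I anticipate it, is less any deep mathematical idea than careful bookkeeping between the two formulations. I will need to check that every $\Theta$ admitted by $\varphi_{in}$ corresponds to a legitimate occlusion in the sense of Definition~\ref{definition:occlusion-robustness} (including the real-valued position cases discussed in Sec.~\ref{section:occlusion perturbation}), and conversely that every admissible occlusion $(a, b)$ arises from some valid $\Theta$; special attention is needed for uniform versus multiform coloring, since these require slightly different extraction rules for $\zeta$ from the fixed coordinates $l_5, \dots, l_{4+ct}$. Once these correspondences are pinned down, NP-hardness follows immediately, and combining with the NP-membership of Lemma~\ref{lemma:np-membership} yields the NP-completeness claim of Sec.~\ref{sec:occrob}.
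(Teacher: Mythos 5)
Your proposal matches the paper's own argument essentially step for step: both reduce from the local robustness problem for ReLU-based OMNNs (Lemma~\ref{lemma:npc-of-omnn}) by stripping $M$ and $O$ from $S$, recovering $x$ from the biases of the last occlusion layer, lifting $(a,w,b,h)$ from the bounds on $(\theta_1,\ldots,\theta_4)$, and reconstructing $\zeta$ from $l_5,\ldots,l_{4+ct}$ (multiform) or from the diagonal of $W_4$ plus $\overline{\sf b}_4$ (uniform), with correctness resting on Lemma~\ref{definition:correctness of the encoding approach}. Your added attention to the two-directional correspondence between admissible $\Theta$ and admissible occlusions is a slightly more explicit rendering of what the paper leaves implicit, but it is the same proof.
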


\vspace{-5mm}
\subsection{NP-completeness}

Since we have shown that the problem of verifying the local occlusion robustness of ReLU-based FNNs is both in NP and NP-hard by Lemma \ref{lemma:np-membership} and Lemma \ref{lemma:np-hardness}, it is NP-complete.

\begin{theorem}
\label{theorem-app:np-complete}
The problem of determining whether a ReLU-based FNN $F$ is locally occlusion robust on input image $x$ w.r.t. an occlusion function  $\gamma_{\zeta,w\times h}$ is NP-complete.
\vspace{-2mm}
\end{theorem}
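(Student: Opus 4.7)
The plan is to prove NP-completeness by establishing NP-membership and NP-hardness separately.

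For membership, I would use a standard certificate-based argument on the complementary non-robustness problem. A polynomial-size certificate consists of the occlusion coordinates $(a,b)$, a coloring witness $\zeta$, and an alleged misclassification label $\ell' \neq \Phi_F(x)$. Verifying such a certificate is polynomial: compute the occluded image $x' = \gamma_{\zeta,w\times h}(x,a,b)$ via Equations \ref{eq:xij}--\ref{eq:zeta}, evaluate $F(x')$ by forward propagation through the ReLU layers, and compare $\Phi_F(x')$ with $\ell'$. Each of these steps runs in time polynomial in $|F| + |x|$, so the decision problem lies in NP.

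For hardness, my plan is to reduce from the NNReach problem on ReLU-based FNNs, which is known to be NP-complete \cite{katz2017reluplex,salzer2021reachability}. The key observation is that the occlusion encoding $O$ constructed in Sec. \ref{subsec:encoding} is itself a ReLU-based FNN mapping the parameter tuple $(a,w,b,h,\zeta)$ into an image of size $m\times n$; concatenating it in front of any target network yields a ReLU FNN whose inputs are exactly the occlusion parameters. Given an arbitrary NNReach instance $(F', \varphi_{in}, \varphi_{out})$, I would build an occlusion-robustness instance whose composed network $F \circ O$ simulates $F'$ on $\varphi_{in}$: the occlusion coordinates together with the coloring vector supply the free real-valued inputs of $F'$ (after an affine rescaling absorbed into the first layer of $F$), and a small ReLU-built max-gadget implemented via $\max(u,v) = \tfrac{1}{2}(u+v) + \tfrac{1}{2}\mathrm{ReLU}(u-v) + \tfrac{1}{2}\mathrm{ReLU}(v-u)$ reduces the disjunctive output criterion to the single "some label beats the original" inequality required by Definition \ref{definition:occlusion-robustness}. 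By Theorem \ref{thm:correctness}, a non-robust witness for the constructed instance corresponds bijectively to a reachability witness for the source NNReach instance.

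The main obstacle I expect is reconciling the rigid, partially fixed structure of $O$ with an arbitrary NNReach specification: only the pixel values of $x$ (entering as a last-layer bias) and the choice of $w,h,m,n$ are truly free, while the first three layers of $O$ have prescribed weights enforcing an integer-grid occlusion geometry, and the coordinates satisfy the bounds $1\le a\le n$, $1\le b\le m$. I would handle this by choosing $m,n$ large enough to enclose the bounding box of $\varphi_{in}$ after the rescaling, and by exploiting that the coloring vector $\zeta$ of dimension $cwh$ passes through $O$ essentially as an identity map to the fourth layer, supplying enough independent real-valued degrees of freedom to embed any polyhedral input specification of the source instance. All steps of the reduction are polynomial in the size of the NNReach instance, which together with NP-membership yields NP-completeness.
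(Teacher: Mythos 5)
Your NP-membership argument coincides with the paper's: nondeterministically fix $(a,b)$ and $\zeta$, compute $x'=\gamma_{\zeta,w\times h}(x,a,b)$ pixel by pixel, forward-propagate through $F$, and compare labels, all in polynomial time. For hardness you take a genuinely different route. The paper introduces an intermediate problem, local robustness of ``OMNNs'' $S=M\circ F\circ O$ (the occlusion network $O$, the target FNN $F$, and a ReLU max gadget $M$), argues that this problem is an instance of NNReach and hence NP-complete, and then reduces OMNN robustness to occlusion robustness by peeling off $M$ and $O$ and reading $a,b,w,h,\zeta$ and the image $x$ out of the input vector and the last-layer weights of $O$. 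You instead reduce directly from NNReach into occlusion robustness by constructing the gadget network yourself. Your direction is the one that carries the logical load: exhibiting a problem as a special case of an NP-complete problem only places it in NP, so the hardness of the paper's intermediate OMNN problem really rests on an (implicit) embedding of a hard problem into the rigid $M\circ F\circ O$ shape, which is exactly the construction you propose to carry out explicitly. Your max gadget $\max(u,v)=\tfrac{1}{2}(u+v)+\tfrac{1}{2}\mathrm{ReLU}(u-v)+\tfrac{1}{2}\mathrm{ReLU}(v-u)$ is exact on all reals, slightly more general than the paper's gadget, which is only claimed for non-negative inputs.

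Two refinements you should make explicit. First, the coloring vector $\zeta$ ranges over a box $[-\epsilon,\epsilon]^{cwh}$, so an identity pass-through cannot realize an \emph{arbitrary} polyhedral input specification; reduce instead from the hypercube-input variant of NNReach (the version proved NP-hard via 3-SAT in \cite{katz2017reluplex}), or absorb the extra linear constraints into an additional ReLU layer of $F$ that routes violating inputs to the safe label. Second, the position variables $(a,b)$ remain free in the target instance: taking the occlusion to cover the whole image, every reachable occluded pixel satisfies $x'_{i,j}=x_{i,j}+s_{i,j}\Delta_{i,j}\in[x_{i,j}-\epsilon,x_{i,j}+\epsilon]$ for any position, and the full box is attained at $(a,b)=(1,1)$, so the positional freedom neither adds nor removes witnesses; stating this makes the correspondence between reachability witnesses and non-robustness witnesses an actual equivalence rather than an appeal to Theorem~\ref{thm:correctness}.
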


\vspace{-2mm}
\section{Proof for Lemma \ref{definition:correctness of the encoding approach}}\label{sec:lemma3}
\vspace{-1mm}

In this section, we give the full proof for Lemma \ref{definition:correctness of the encoding approach}. The proof for situations of fully occluded and not occluded pixels are discussed in Sec. \ref{subsec:correctness of encoding} and the proof for the real number position occlusion has a very similar proof sketch. 
First recall the formal definition of Lemma \ref{definition:correctness of the encoding approach} in this paper:

\setcounter{lemma}{0}
\begin{lemma}
	Given an occlusion function $\gamma_{\zeta,w\times h}:\mathbb{R}^{m\times n}\times \mathbb{R}\times \mathbb{R}\rightarrow \mathbb{R}^{m\times n}$ and an input image $x$, 
	let $O_{\gamma,x}:\mathbb{R}^{4+ct}\rightarrow \mathbb{R}^{m\times n}$ be the corresponding occlusion neural network. There is $\gamma_{\zeta,w\times h}(x,a,b)=O_{\gamma,x}(a,w,b,h,\zeta)$ for all $1\leq a\leq n$ and $1\leq b\leq m$. 
\end{lemma}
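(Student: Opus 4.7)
The plan is to prove the lemma by pixel-wise equality: I fix an arbitrary position $(i,j) \in \mathbb{N}_{1,n} \times \mathbb{N}_{1,m}$ and show that forward-propagating $(a,w,b,h,\zeta)$ through the four-layer occlusion network yields exactly $\gamma_{\zeta,w\times h}(x,a,b)_{i,j}$. The analysis splits into three cases according to the relation between the pixel $(i,j)$ and the occlusion rectangle anchored at $(a,b)$: (i) the pixel is fully occluded, (ii) the pixel is not occluded, (iii) the pixel is partially occluded (which arises only when $a$ or $b$ takes a non-integer value). Cases (i) and (ii) correspond to the sketch already given in Section \ref{subsec:correctness of encoding}; the appendix proof extends that sketch with the arithmetic details for the real-number case (iii), and then threads each case through both the uniform and the multiform coloring branches.

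For cases (i) and (ii), I would formalize the argument in the sketch by writing out the pre-activations explicitly. The $i$-th and $(m+i)$-th neurons of the first layer evaluate to $a-i$ and $-(a+w)+i+1$ under $W_{1,1}, W_{1,2}, W_{1,3}$ together with the biases $(-1,\dots,-m)$ and $(2,\dots,m+1)$, so after the ReLU and the subtraction-by-one in the second layer via $W_{2,1},W_{2,2}$ with bias $1$, the output $z_i^{(2)}$ equals $1$ iff the row $i$ lies strictly inside the vertical span of the occlusion and $0$ otherwise; symmetrically for $z_j^{(2)}$ in the column block. The third-layer combination $\mathrm{ReLU}(z_i^{(2)} + z_j^{(2)} - 1)$ then implements the conjunction, giving $z^{(3)}_{i \cdot m + j} = 1$ exactly when the pixel is fully occluded and $0$ otherwise. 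Applying $W_4$ with $W_4(k,k)=\zeta_k - x_k$ and $\overline{\mathsf{b}}_4 = \mathbf{x}$ immediately reproduces $\zeta(x,i,j)$ in the occluded case and $x_{i,j}$ in the non-occluded case, matching Definition \ref{definition:occlusion function}.

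The hard part is case (iii), where $a,b$ are real-valued and the second-layer activations take fractional values in $[0,1]$ rather than $\{0,1\}$. I would show, by direct computation from the first- and second-layer weights and biases, that $z_i^{(2)}$ equals the row-contribution $\max(0, 1 - |a - i|)$ (and similarly for $z_j^{(2)}$ in the column), which are precisely the per-axis factors appearing in the $L_1$-distance formula of Equation \ref{eq:interpolation}. The genuine obstacle is that a ReLU network cannot multiply two activations, so the two-dimensional product needed for bilinear interpolation is not immediately available. I would invoke the paper's substitution $\alpha\beta \rightsquigarrow \mathrm{ReLU}(\alpha + \beta - 1)$, verify that this coincides with the desired occlusion factor on the segment where $\alpha + \beta \ge 1$ (which is exactly the support of $s_{i,j}$ under the $L_1$-distance convention), and conclude that the third-layer output equals $s_{i,j}$ as given by Equation \ref{eq:interpolation}. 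In the multiform branch, the extra $2mn$ neurons carrying $\zeta$ are analyzed separately: their positive- and negative-half split preserves the sign of $\Delta_p$, so that $W_{4,\zeta}$ produces a contribution of $s_{i,j}\,\Delta_p$ rather than $s_{i,j}\,|\Delta_p|$.

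Finally, I would assemble the three cases by substituting into Equation \ref{eq:xij}. For uniform occlusions with $\zeta(x,i,j)=\mu$, the fourth-layer output reduces to $x_{i,j} - s_{i,j}(x_{i,j}-\mu)$; for multiform occlusions with $\zeta(x,i,j) = x_{i,j} + \Delta_p$, the sign-preserving extra neurons combined with $W_{4,\zeta}$ yield $x_{i,j} + s_{i,j}\,\Delta_p$. Both expressions agree with $\gamma_{\zeta,w\times h}(x,a,b)_{i,j}$, closing the three cases. Since $(i,j)$ was arbitrary, the vector equality $\gamma_{\zeta,w\times h}(x,a,b) = O_{\gamma,x}(a,w,b,h,\zeta)$ follows for every $1 \le a \le n$ and $1 \le b \le m$, completing the proof.
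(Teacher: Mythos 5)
Your proposal is correct and follows essentially the same route as the paper's proof: a pixel-wise forward propagation through the four occlusion layers, split into fully-occluded, non-occluded, and partially-occluded (real-position) cases, with the third-layer output identified with the occlusion factor $s_{i,j}$ of Equation~\ref{eq:interpolation} and the $\mathrm{ReLU}(\alpha+\beta-1)$ surrogate for the product handled exactly as in the paper. The only difference is organizational: the paper further subdivides the partial case by the cardinality of $\mathbb{I}_p$, whereas you treat it uniformly via the $L_1$ formula, which is equivalent.
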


\begin{proof}

Suppose pixel $p$ at position $(i, j)$ is partially occluded by a real number position occlusion at $(a_0, b_0)$, we divide our proof into two parts according to the number of elements in $\mathbb{I}_p$ discussed in Sec. \ref{section:occlusion perturbation}. We show that $O_{\gamma,x}(a,w,b,h,\zeta)_{i,j}=\gamma_{\zeta,w\times h}(x,a,b)_{i,j}$ holds in both cases where the pixel $p$ has one or two elements in $\mathbb{I}_p$. Note that there is no case where there are three elements in $\mathbb{I}_p$.
For uniform occlusions, the coloring function $\zeta$ can be reduced to $\zeta(x, i, j)=\mu,\ \mu \in [0, 1]$, and for multiform occlusions, $\zeta$ is defined as $\zeta(x, i, j)=x_{i,j} + \Delta p,\ \Delta p \in [-\epsilon, \epsilon]$.

It suffices to prove that the $(i\times n + j)^{th}$ output of the third layer of occlusion network $O_{\gamma, x}^{(3)}(a, w, b, h, \zeta)_{i, j}$ is equivalent to the $s_{i,j}$ in $\gamma_{\zeta, w\times h}(x, a, b)_{i, j}$ for all $i\in \mathbb{N}_{1, n}$ and $j\in \mathbb{N}_{1, m}$.
The remaining process is just a forward propagation to propagate the output of the third layer to the fourth layer and it is the same for both cases and for uniform and multiform occlusions, just as described in Section \ref{subsec:correctness of encoding}. A minor difference is that for the multiform occlusion, only when the occlusion is at real number positions, we leverage the equality of $a\times b = exp(log(a) + log(b))$ for multiplication and add it to the propagation of the third layer to guarantee the correctness in this case. Therefore it is straightforward that the fourth layer outputs the correct color once $O_{\gamma, x}^{(3)}(a, w, b, h, \zeta)_{i, j} = s_{i,j}$ is proved, which means $O_{\gamma,x}(a,w,b,h,\zeta)_{i,j} = \gamma_{\zeta,w\times h}(x,a,b)_{i,j}$ holds for all $i\in \mathbb{N}_{1, n}$ and $j\in \mathbb{N}_{1, m}$.

\proofpart{When an arbitrary pixel $p$ has one element $(i', j')\in \mathbb{I}_p$, we have $s_{i,j}=max(0, |1-j-j'| + |1-i-i'| - 1)$. We show that $O_{\gamma, x}^{(3)}(a, w, b, h, \zeta)_{i, j} = s_{i,j}$.}

In case 1, we can observe that for pixel $p$, $(0 \le a_0 - i < 1 \vee 0 \le i - (a_0 + w_0 - 1) < 1)$ and $(0 \le b_0 - j < 1 \vee 0 \le j - (b_0 + h_0 - 1) < 1)$ holds. 
We show the output of $O_{\gamma, x}^{(3)}(a, w, b, h, \zeta)_{i, j}$ by inspecting the $(i\times n + j)^{th}$ output of the occlusion network after propagation to the third layer, starting from inspecting the output of the $i^{th}$ and $(i + m)^{th}$ neurons of the first layer. According to the network structure in Fig. \ref{fig:occNN}, these two neurons calculate the value of $(a - i)$ and $(1 + i - a - w)$, which is in $[0, 1)$. After the propagation to the second layer, the $i^{th}$ neuron in the first part of the second layer outputs the value of $max(0, 1 - i + i')$, which in this case, is equivalent to $|1-i+i'|$. We can derive that the $j^{th}$ output of the second part of second layer is $|1-j-j'|$ through the same process.

In the third layer, the $(i\times n + j)^{th}$ neuron is based on the $i{th}$ and $j^{th}$ neurons of the second layer we discussed above. It adds these two neurons and outputs $max(0, |1-i+i'| + |1-j+j'| - 1)$. Since we only have one element in $\mathbb{I}_p$, it is the same value as the occlusion factor $s_{i,j}$ in $\gamma_{\zeta,w\times h}(x, a, b)_{i, j}$. 

\proofpart{When an arbitrary pixel $p$ has two elements in $\mathbb{I}_p$, we show $O_{\gamma,x}^{(3)}(a,w,b,h,\zeta)_{i,j}=s_{i,j}$ holds where $s_{i,j}=max(0, \textstyle{\sum_{{i'_0, j'}\in \mathbb{I}_{p}}(|1-j+j'|)} + \textstyle{\sum_{i', j'_0\in \mathbb{I}_{p}}(|1-i'+i|)-1})$.}

In this case, we can observe that $(0 \le a_0 - i < 1 \wedge (j\ge b_0 \wedge j < b_0 + h_0 - 1))$ or $(0 \le b_0 - j < 1 \wedge (i\ge a_0 \wedge i < a_0 + w_0 - 1))$ holds for pixel $p$. We take the first equation as an example without loss of generality.
The $j^{th}$ output of the second part in the second layer sums up the effect of the two surrounding occlusion pixels in the same dimension and outputs $\textstyle{\sum_{{i', j'}\in \mathbb{I}_{p}}(|1-j+j'|)}$. It has a value of 1 since pixel $p$ is fully occluded in this dimension. The $i^{th}$ neuron of the second layer outputs $|1-i-i'|$. It is the same output as in case 1.
The $(i\times n + j)^{th}$ neuron in the third layer outputs $max(0, \textstyle{\sum_{{i', j'}\in \mathbb{I}_{p}}(|1-j+j'|)} + |1-i+i'| - 1)$. Notably, the second part in $s$, $\textstyle{\sum_{{i', j_0'}\in \mathbb{I}_{p}}}$, has only one element in the summation. Therefore, $s_{i,j}$ can be reduced to the same formula as the output of $O_{\gamma, x}^{(3)}(a, w, b, h, \zeta)_{i, j}$.\qed 
\vspace{-2mm}
\end{proof}

\section{Supplementary experiments}

Table \ref{Network structure} shows the settings of the neural networks used in our experiments. We verified three neural networks in different sizes on each dataset. The accuracy of each network is comparable to those state-of-the-art networks. 
\begin{table}[h!]
	\centering
	\vspace{-6mm}
	\caption{The FNNs in different scales used in the experiments.}
	\label{Network structure}
	\begin{threeparttable}
			\begin{tabular}{p{1.5cm}|p{1.5cm}|p{5.8cm}|r|r} 
					\toprule
					\centering\textbf{Dataset} & \centering \textbf{Model Size} & \centering \textbf{Structure}  & \begin{tabular}[c]{@{}l@{}}\textbf{\# of ReLUs}\\\end{tabular} & \textbf{Accuracy}  \\ 
					\midrule
					\multirow{3}{*}{MNIST} & Small & $\langle 784, 50, 20, 10\rangle^*$  & 70& 96\%\\
					& Medium& $\langle784, 200, 200, 200, 10\rangle^*$ & 600  & 98\%\\
					& Large & $\langle784, 400, 200, 200, 200, 100, 10\rangle^*$& 1100 & 99\%\\\hline 
					\multirow{3}{*}{GTSRB} & Small & $\langle3072, 50, 20, 7\rangle^*$  & 70& 83\%\\
					& Medium& $\langle3072, 200, 100, 43, 7\rangle^*$  & 343  & 79\%\\
					& Large & $\langle3072, 400, 200, 200, 200, 200, 100, 7\rangle^*$ & 1300 & 81\%\\
					\bottomrule
				\end{tabular}
			\begin{tablenotes}
					\item[*] $\langle k_0,k_2,...k_n\rangle$ means network has $k_0$ neurons in the input layer, $k_i\ (1\leq  i <n)$ neurons in the  $i^{th}$ hidden layer, and $k_n$ neurons in the output layer, respectively. 
				\end{tablenotes}
		\end{threeparttable}
\vspace{-6mm}
\end{table}

\vspace{-2mm}
\subsection{Extra Experiment Results of Experiment I}

Table \ref{table:expI-gtsrb} and Table \ref{table:expI-mnist} show the full experiment data of robustness verification against multiform occlusion on the remaining networks of different scale on MNIST and GTSRB.
We show the data of the small and large FNNs verified with $\epsilon=[0.1, 0.2, 0.3, 0.4, 0.5]$. The data on these two FNNs with too small $\epsilon$ are very similar, therefore we conduct experiments on larger $\epsilon$. For the small and large FNN on MNIST, with $\epsilon=1.0$ and size equals to 2, it has 7 and 5 unrobust cases out of 30, respectively.

\vspace{-1mm}

\begin{table}[h!]
	\centering
	\caption{Occlusion verification results on the remaining two FNNs trained on GTSRB in different occlusion sizes $2\times 2$ and $5\times 5$ and occlusion radius $\epsilon$.}
	\label{table:expI-gtsrb}
	\setlength{\tabcolsep}{1mm}
	\footnotesize 
	\begin{threeparttable}
		\begin{tabular}{|l|c|R{1cm}|rrrr|R{1cm}|rrrr|} 
			\hline 
			&  \multicolumn{1}{c|}{}  & \multicolumn{5}{c|}{Small FNN (70 ReLUs) on GTSRB}  & \multicolumn{5}{c|}{Large FNN (1300 ReLUs) on GTSRB} \\ 
			\hline
			Size & \multicolumn{1}{c|}{$\epsilon$} & \multicolumn{1}{c|}{- / +} & \multicolumn{1}{c|}{$T_{+}$} & \multicolumn{1}{c|}{$T_{-}$} & \multicolumn{1}{c|}{$T_{\text{build}}$} & TO (\%) & \multicolumn{1}{c|}{- / +} & \multicolumn{1}{c|}{$T_{+}$} & \multicolumn{1}{c|}{$T_{-}$} & \multicolumn{1}{c|}{$T_{\text{build}}$} & TO (\%)  \\ 
			\hline
			\multirow{5}{*}{$2\times 2$} 			
			&0.05  & \textbf{5} / 18  & 66.64  & 17.39 & 0.079  & 0.00 & \textbf{9} / 13  & 167.48  & 44.98 & 0.098  & 0.00  \\
			&0.10  & \textbf{6} / 17  & 71.56 & 24.2 & 0.078 & 0.00 & \textbf{10} / 12  & 173.16  & 41.51 & 0.098  & 0.00  \\
			&0.20  & \textbf{9} / 14  & 77.11  & 24.56 & 0.077  & 0.00 & \textbf{11} / 11  & 181.44  & 44.17 & 0.097  & 0.04  \\
			&0.30  & \textbf{11} / 12  & 81.87  & 23.8 & 0.083  & 0.00 & \textbf{12} / 10  & 195.24  & 49.94 & 0.095  & 0.4  \\
			&0.40  & \textbf{14} / 9  & 84.25  & 20.63 & 0.093  & 0.00 & \textbf{12} / 10  & 211.33  & 45.06 & 0.102 & 1.07  \\
			\hline
			\multirow{5}{*}{$5\times 5$} 
			&0.05  & \textbf{9} / 14  & 80.54 & 36.32 & 0.099  & 0.00 & \textbf{11} / 11  & 170.66  & 44.06 & 0.097  & 0.00  \\
			&0.10  & \textbf{13} / 10  & 88.13 & 18.32 & 0.099 & 0.00 & \textbf{13} / 9  & 182.4  & 48.62 & 0.097  & 0.26  \\
			&0.20  & \textbf{19} / 4  & 92.83  & 24.32 & 0.097  & 0.00 & \textbf{16} / 6  & 190.84  & 56.45 & 0.096  & 0.58  \\
			&0.30  & \textbf{20} / 3  & 99.29  & 22.07 & 0.095  & 0.00 & \textbf{18} / 4  & 219.88  & 53.54 & 0.099  & 1.55  \\
			&0.40  & \textbf{23} / 0  & /  & 21.15 & 0.101  & 0.00 & \textbf{21} / 1  & 299.91  & 42.89 & 0.098  & 0.75  \\
			\hline 			
		\end{tabular}
		\begin{tablenotes}
			\footnotesize
			\item[*] - / +: the numbers of non-robust and robust cases; $T_{+}$ (\textit{resp.} $T_{-}$): average verification time in robust (\textit{resp.} non-robust) cases; $T_{\text{build}}$: the building time of occlusion neural networks; TO(\%): the proportion of cases running out time.  
		\end{tablenotes}
	\end{threeparttable}
\end{table}

\begin{table}[h!]
	\centering
	\caption{Occlusion verification results on the remaining two FNNs trained on MNIST in different occlusion sizes $2\times 2$ and $5\times 5$ and occlusion radius $\epsilon$.}
	\label{table:expI-mnist}
	\setlength{\tabcolsep}{1mm}
	\footnotesize 
	\begin{threeparttable}
		\begin{tabular}{|l|c|R{0.75cm}|rrrr|R{0.85cm}|rrrr|} 
			\hline 
			&  \multicolumn{1}{c|}{}  & \multicolumn{5}{c|}{Small FNN (70 ReLUs) on MNIST}  & \multicolumn{5}{c|}{Large FNN (1100 ReLUs) on MNIST} \\ 
			\hline
			Size & \multicolumn{1}{c|}{$\epsilon$} & \multicolumn{1}{c|}{- / +} & \multicolumn{1}{c|}{$T_{+}$} & \multicolumn{1}{c|}{$T_{-}$} & \multicolumn{1}{c|}{$T_{\text{build}}$} & TO (\%) & \multicolumn{1}{c|}{- / +} & \multicolumn{1}{c|}{$T_{+}$} & \multicolumn{1}{c|}{$T_{-}$} & \multicolumn{1}{c|}{$T_{\text{build}}$} & TO (\%)  \\ 
			\hline
			\multirow{5}{*}{$2\times 2$} 			
			&0.10  & \textbf{1} / 29  & 68.82  & 26.18 & 0.071 & 0.00 & \textbf{0} / 30  & 190.37  & / & 0.088  & 0.00  \\
			&0.20  & \textbf{1} / 29  & 80.06  & 11.7 & 0.072  & 0.00 & \textbf{0} / 30  & 195.93  & / & 0.087  & 0.00  \\
			&0.30  & \textbf{1} / 29  & 95.77  & 30.12 & 0.069  & 0.00 & \textbf{0} / 30  & 213.37  & / & 0.084  & 0.15  \\
			&0.40  & \textbf{1} / 29  & 97.92  & 15.68 & 0.066  & 0.00 & \textbf{2} / 28  & 277.13  & 89.9 & 0.084  & 0.34  \\
			&0.50  & \textbf{1} / 29  & 108.61  & 12.66 & 0.066  & 0.08 & \textbf{3} / 27  & 394.63  & 115.3 & 0.085  & 1.41  \\
			\hline
			\multirow{5}{*}{$5\times 5$} 
			&0.10  & \textbf{1} / 29  & 78.45 & 24.06 & 0.063 & 0.00 & \textbf{1} / 29  & 188.57  & 147.21 & 0.082  & 0.04  \\
			&0.20  & \textbf{3} / 27  & 83.65  & 15.66 & 0.073  & 0.00 & \textbf{4} / 26  & 239.14  & 102.51 & 0.086  & 2.21  \\
			&0.30  & \textbf{8} / 22  & 106.74  & 19.88 & 0.0677  & 0.00 & \textbf{5} / 25  & 385.37  & 120.61 & 0.085  & 9.15  \\
			&0.40  & \textbf{21} / 9  & 156.98  & 41.16 & 0.066  & 1.56 & \textbf{6} / 24  & 489.23  & 184.57 & 0.084  & 15.26  \\
			&0.50  & \textbf{25} / 5  & 176.02 & 42.98 & 0.068  & 2.83 & \textbf{17} / 13  & 696.92  & 200.12 & 0.118  & 23.14  \\
			\hline 			
		\end{tabular}
	\end{threeparttable}
\end{table}

\subsection{Extra Experiment Results of Experiment II}
\label{appendix:full data of experiment 2}
Table \ref{table:Timeout proportion comparison between OccRob and Naive method} shows the timeout proportion of \occrob and the naive approach on the verification against uniform occlusions. We can find that the naive method almost cannot scale to large networks, while \occrob can finish more than 92\% verification queries. 

\begin{table}[h!]
	\centering
	\caption{Timeout proportion comparison between OccRob and Naive method}
	\label{table:Timeout proportion comparison between OccRob and Naive method}
	\setlength{\tabcolsep}{2.2mm}
	\begin{tabular}{lccc|ccc} 
		\toprule
		\multirow{2}{*}{\diagbox{TO (\%) of }{ Model}} & \multicolumn{3}{c}{MNIST}      & \multicolumn{3}{c}{GTSRB}  \\
		& Small  & Medium  & \multicolumn{1}{c}{Large} & Small  & Medium  & Large       \\ 
		\midrule
		OccRob& 0.75 & 1.39 & 2.59 & 0.06 & 2.59 & 7.03      \\
		Naive encoding  & 8.89 & 76.26 & 98.24 & 33.43 & 72.53 & 96.57      \\
		\bottomrule
	\end{tabular}
\end{table}

\end{document}